\documentclass{article}



\usepackage[preprint]{neurips_2023}



\usepackage[utf8]{inputenc} 
\usepackage[T1]{fontenc}    
\usepackage{hyperref}       
\usepackage{url}            
\usepackage{booktabs}       
\usepackage{amsfonts}       
\usepackage{nicefrac}       
\usepackage{microtype}      
\usepackage{xcolor}         

\usepackage{amsmath}
\usepackage{amssymb}
\usepackage{mathtools}
\usepackage{amsthm}

\usepackage{makecell}
\usepackage[symbol]{footmisc}

\usepackage{algorithm}
\usepackage{algorithmic}

\DeclareMathOperator{\erfc}{erfc}

\usepackage[
raggedright
]{sidecap}   
\sidecaptionvpos{figure}{t}

\theoremstyle{plain}
\newtheorem{theorem}{Theorem}[section]
\newtheorem{proposition}[theorem]{Proposition}
\newtheorem{lemma}[theorem]{Lemma}
\newtheorem{corollary}[theorem]{Corollary}
\theoremstyle{definition}

\theoremstyle{plain}
\newtheorem{remark}[theorem]{Remark}

\title{The Disharmony between BN and ReLU Causes Gradient Explosion, but is Offset by the Correlation between Activations}

%

\author{%
  Inyoung Paik, Jaesik Choi  \\
  KAIST, South Korea\\
  \texttt{\{humandream, jaesik.choi\}@kaist.ac.kr} \\
}

\begin{document}

\maketitle

\begin{abstract}
Deep neural networks, which employ batch normalization and ReLU-like activation functions, suffer from instability in the early stages of training due to the high gradient induced by temporal gradient explosion. In this study, we analyze the occurrence and mitigation of gradient explosion both theoretically and empirically, and discover that the correlation between activations plays a key role in preventing the gradient explosion from persisting throughout the training. Finally, based on our observations, we propose an improved adaptive learning rate algorithm to effectively control the training instability. 
\end{abstract}

\section{Introduction}
The success of deep neural networks stems from their remarkable expressive power, which grows exponentially with increasing depth \citep{depth_width_tradeoff}. However, the deep hierarchical architecture of these networks can give rise to the problem of exploding/vanishing gradients \citep{goodfellow2016}, which significantly impairs performance and may render training infeasible.

Some studies \citep{exploding_exist,mean_field} have shown that gradient explosion can occur in modern neural networks, showing that a stable flow of the forwarding signal does not guarantee a stable flow of backward gradient. However, since the advent of normalization methods \citep{BatchNorm,InstanceNorm,LayerNorm,GroupNorm,weight_standardization}, it appears that gradient explosion is no longer a serious issue and is often regarded as a problem of the past.

However, we note that gradient explosion actually exists in the variety of modern deep neural networks with ReLU and batch normalization \citep{BatchNorm} in the initialization state. However, due to the rapid changes in training dynamics, different terminologies have been used to describe this problem, like `large gradient magnitude at the early stage of training' or `instability of deep neural networks in the early phase of training' \citep{early_phase,warmup}. It is also often addressed as an `instability problem of large batch training' \citep{warmup}, as the severity of the issue escalates when using a large batch size (alongside a correspondingly high learning rate) during training.

The disparity between theory and practice has hindered the resolution of this problem, and practical solutions have been developed relying on empirical experience and intuition. In this paper, we aim to offer a more precise diagnosis of the problem and propose a more efficient solution derived from this diagnosis.

\begin{figure}
\includegraphics[width=0.5\textwidth]{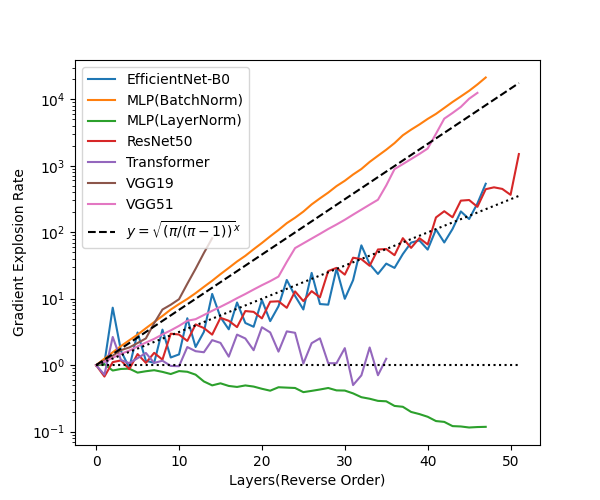} 
\includegraphics[width=0.5\textwidth]{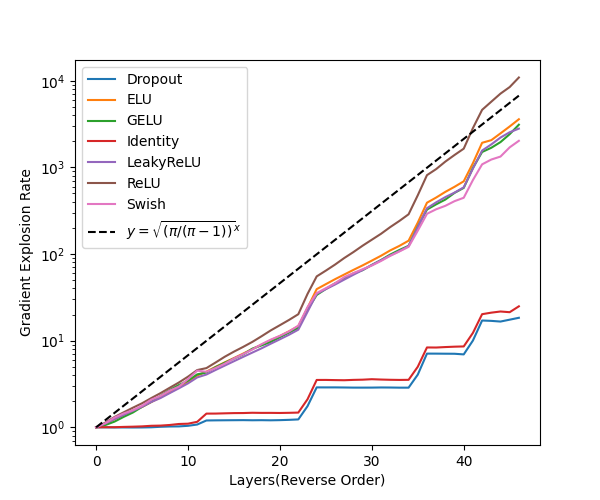} 
\caption{Gradient explosion rate ($\sqrt{\mathrm{Var}(g^n)/\mathrm{Var}(g^N)}$) of deep neural network models at the initialization state corresponding to different architectures (left) and activation functions (right). The explosion rate is approximately $\sqrt{\pi/(\pi-1)}$ in vanilla networks with batch normalization. However, it is lower in architectures with residual connections \citep{ResNet}, which reduces the effective depth. \citep{exploding_exist} No evidence of gradient explosion was observed in architectures without batch normalization, including transformer-based architectures \citep{vit}. \\
The figure on the right is plotted using VGG \citep{VGG} architecture with different activation functions. Smoother variants of ReLU \citep{leakyrelu,elu,swish,gelu} exhibit lower explosion rates due to their flatter behavior near the origin. It is worth noting that gradient explosion does not occur with DropOut \citep{dropout}, which can be regarded as a ReLU that randomly blocks signals.}
\label{fig:exploding}
\end{figure}

\section{How Gradient Explosion Occurs}
As weights are repeatedly multiplied during forward/backward propagation, the exploding or vanishing gradient problem is commonly believed to be caused by excessively large or small parameters \citep{exploding_b,exploding_RNN}. Consequently, it has often been regarded as a problem related to the initialization and maintenance of appropriate weight scales. \citep{he_initialization} assumed that initializing weight with $N(0,\sqrt{2/n_{out}})$ would maintain similar variances in both forward and backward propagation. In this perspective, this problem was seen as potentially `solved' with the introduction of (batch) normalization \citep{BatchNorm}, which automatically corrects suboptimal choices of weight scales.

However, it is important to consider how the activation function affects the input distribution. Figure \ref{fig:relu} illustrates that using the positive part of the activation differs from randomly blocking selected activations. In short, \citep{he_initialization} described the scenario assuming DropOut \citep{dropout} as the activation function rather than ReLU. In this case, both the forward and backward signals are roughly halved, preventing the occurrence of either exploding or vanishing gradients. However, real activation functions are highly dependent on the forward signal while being almost uncorrelated with the backward gradient. This discrepancy can lead to differential effects on the variance during forward and backward propagation \citep{exploding_exist}.

\begin{figure}[ht]
\centering
\includegraphics[width=0.9\textwidth]{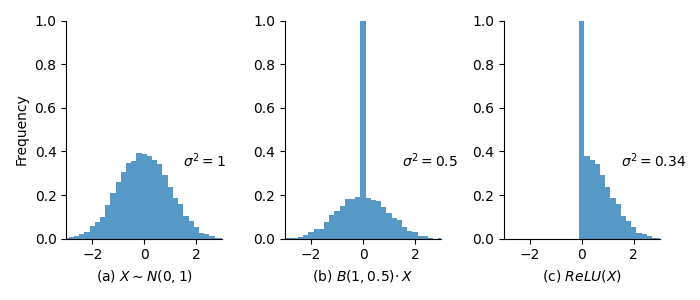}
\caption{(a) Sample of normally distributed input. (b) Hypothetical output distribution assuming that ReLU randomly drops half of the input. (c) Real distribution after applying ReLU, which exhibits a smaller variance than that in (b). \citep{he_initialization} assumed the post-ReLU distribution to be similar to that depicted in (b), in which half the variance is compared to the input. It leads to the conclusion that selecting optimal weights to maintain input variance also ensures the stability of the gradient variance since ReLU also halves the gradient during backpropagation. However, the correlation between the input and the activation function can affect the input variance and gradient variance differently. With batch normalization, which restores the variance reduced by the activation function, the backpropagating gradient may exponentially increase while the forward propagation remains stable.}

\label{fig:relu}
\end{figure}

\begin{proposition}
\label{thm:prop1}
Let $X = \mathrm{ReLU}(Y) = max(Y, 0)$, where $Y \sim N(\mu,\sigma^2)$. Then, we have:

\begin{equation}
\begin{aligned}
\mathbb{E}(X) = \mu(1-\phi (-\frac{\mu}{\sigma})) + \frac{\sigma}{\sqrt{2\pi}}e^{-\frac{\mu^2}{2\sigma^2}} \\
\end{aligned}
\end{equation}

\begin{equation}
\begin{aligned}
\mathrm{Var}(X)
    & = (\sigma^2+\mu^2)(1-\phi(-\frac{\mu}{\sigma})) + \mu \frac{\sigma}{\sqrt{2\pi}}e^{-\frac{\mu^2}{2\sigma^2}}  - (\mu(1-\phi (-\frac{\mu}{\sigma})) + \frac{\sigma}{\sqrt{2\pi}}e^{-\frac{\mu^2}{2\sigma^2}})^2
\end{aligned}
\end{equation}

where $\phi(x) = \int_{-\infty}^{x}\frac{1}{\sqrt{2\pi}}e^{-x^2/2}$ is the cumulative distribution function of the standard normal distribution.
\end{proposition}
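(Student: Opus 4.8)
The plan is to compute $\mathbb{E}(X)$ and $\mathbb{E}(X^2)$ directly from their defining integrals and then obtain $\mathrm{Var}(X) = \mathbb{E}(X^2) - \mathbb{E}(X)^2$. Since $X = \max(Y,0)$ vanishes on the event $\{Y \le 0\}$, for any power $k$ we have $\mathbb{E}(X^k) = \int_0^\infty y^k\,\frac{1}{\sigma\sqrt{2\pi}}\,e^{-(y-\mu)^2/(2\sigma^2)}\,dy$, so both moments reduce to truncated Gaussian integrals over the half-line $y > 0$ and no contribution comes from the negative part.

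First I would substitute $z = (y-\mu)/\sigma$ to normalize, which turns the lower limit $y = 0$ into $z = -\mu/\sigma$ and gives $\mathbb{E}(X^k) = \int_{-\mu/\sigma}^{\infty} (\mu + \sigma z)^k\,\psi(z)\,dz$, where $\psi(z) = \frac{1}{\sqrt{2\pi}}e^{-z^2/2}$ denotes the standard normal density. Expanding $(\mu+\sigma z)^k$ by the binomial theorem, the whole computation reduces to three elementary integrals of $\psi$, $z\psi$, and $z^2\psi$ over $[-\mu/\sigma,\infty)$: the first equals $1 - \phi(-\mu/\sigma)$ by definition of the CDF; the second equals $\psi(-\mu/\sigma) = \frac{1}{\sqrt{2\pi}}e^{-\mu^2/(2\sigma^2)}$ because $\psi' = -z\psi$; and the third follows from integration by parts, $\int z^2\psi\,dz = -z\psi + \phi$ up to additive constants, evaluating to $1 - \phi(-\mu/\sigma) - \frac{\mu}{\sigma}\psi(-\mu/\sigma)$ on this interval (using $\psi(-\mu/\sigma) = \psi(\mu/\sigma)$ by evenness).

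Assembling the $k = 1$ case immediately gives $\mathbb{E}(X) = \mu(1-\phi(-\mu/\sigma)) + \frac{\sigma}{\sqrt{2\pi}}e^{-\mu^2/(2\sigma^2)}$. For $k = 2$, collecting the $\mu^2$, $2\mu\sigma z$, and $\sigma^2 z^2$ terms, the two contributions proportional to $\psi(-\mu/\sigma)$ combine as $2\mu\sigma\,\psi - \mu\sigma\,\psi = \mu\sigma\,\psi$, yielding $\mathbb{E}(X^2) = (\sigma^2+\mu^2)(1-\phi(-\mu/\sigma)) + \mu\,\frac{\sigma}{\sqrt{2\pi}}e^{-\mu^2/(2\sigma^2)}$. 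Subtracting $\mathbb{E}(X)^2$ then reproduces the stated variance formula verbatim, so nothing further is required.

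I do not anticipate a genuine obstacle here, since this is a standard rectified-Gaussian moment computation; the only points demanding care are bookkeeping ones: keeping the sign of $-\mu/\sigma$ straight in the limits of integration, exploiting the evenness of $\psi$ to rewrite $\psi(-\mu/\sigma)$, and correctly cancelling the boundary term from the $z^2$-integral against the cross term in $\mathbb{E}(X^2)$ so that a single $\mu\sigma$-coefficient survives.
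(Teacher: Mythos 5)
Your proof is correct, but it takes a different route from the paper's. You compute $\mathbb{E}(X)$ and $\mathbb{E}(X^2)$ directly as truncated Gaussian integrals: substitute $z=(y-\mu)/\sigma$, expand $(\mu+\sigma z)^k$, and reduce everything to the three standard integrals $\int_a^\infty \psi$, $\int_a^\infty z\psi$, $\int_a^\infty z^2\psi$ over $a=-\mu/\sigma$ (your evaluations of these, including the boundary term $a\,\psi(a)$ in the $z^2$ integral and the cancellation $2\mu\sigma\psi - \mu\sigma\psi = \mu\sigma\psi$ in $\mathbb{E}(X^2)$, all check out). The paper instead first derives the moment generating function $M_X(t)=\phi(-\mu/\sigma)+e^{\mu t+\sigma^2 t^2/2}\bigl(1-\phi(-\tfrac{\mu+\sigma^2 t}{\sigma})\bigr)$ of the rectified Gaussian and obtains the first two moments by differentiating twice and setting $t=0$. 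Your approach is more elementary and self-contained for the two moments actually needed; the paper's MGF route front-loads the work into one closed-form object from which all moments (and, in principle, tail bounds) follow by differentiation, at the cost of a slightly more delicate calculation when simplifying $M_X'$ and $M_X''$. Both arrive at identical expressions, so nothing is missing from your argument.
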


\begin{proof}
See Appendix \ref{prf:prop1} 
\end{proof}

\begin{theorem}
\label{thm:basic}
Consider the repetitive neural network architecture $f^n : \mathbb{R}^{d_n} \longmapsto \mathbb{R}^{d_{n+1}}$. Let $x^{n+1} = f^n(x^n) = \mathrm{BatchNorm}(W^n(\mathrm{ReLU}(x^n))+b^n)=\frac{W^n(\mathrm{ReLU}(x^n))+b^n-\hat{\mu}}{\hat{\sigma}}\gamma^{n+1}+\beta^{n+1}$, where $\gamma^{n+1},\beta^{n+1} \in \mathbb{R}^{d_{n+1}}$ are affine transformation parameters, and $\hat{\mu},\hat{\sigma} \in \mathbb{R}^{d_{n+1}}$ are the estimated mean and standard deviation. Let input values $x^n_i \sim N(\mu^n, (\sigma^n)^2)$ be independent, the gradient of the output ($g^{n+1}$) is uncorrelated to the input, and all variables ($x,w,g$) have finite variance. Ignoring the sampling error of batch normalization, we have:

\begin{equation}
C(R) \leq \mathbb{E}_w\bigg[\frac{\sum_i \mathrm{Var}_x(x_i^n) \mathrm{Var}_g(g^n_i)}{\sum_j \mathrm{Var}_x(x_j^{n+1}) \mathrm{Var}_g(g^{n+1}_j)}\bigg] 
\end{equation}
where

\begin{equation}
C(R) = \frac{\erfc(-R)} { (1+2R^2)\erfc(-R) + 2R \frac{e^{-R^2}}{\sqrt{\pi}}  - (R\erfc(-R) +   \frac{e^{-R^2}}{\sqrt{\pi}} )^2}
\end{equation}

and $R := \frac{\mu^n}{\sqrt{2}\sigma^n}$, $\erfc (x) = \frac{2}{\pi} \int_{x}^{\infty}\frac{1}{\sqrt{2\pi}}e^{-x^2/2}$ is the complementary error function. 

\end{theorem}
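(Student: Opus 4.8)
The plan is to propagate the forward variance and the backward (gradient) variance through a single block $f^n$, collapse the ratio in the theorem to one scalar quantity, and then recognize that quantity as $C(R)$ via Proposition~\ref{thm:prop1}. First I would set $a^n=\mathrm{ReLU}(x^n)$ and $z^{n+1}=W^n a^n+b^n$, so $x^{n+1}=\mathrm{BatchNorm}(z^{n+1})$. Since the $x^n_i$ are i.i.d.\ $N(\mu^n,(\sigma^n)^2)$, the $a^n_i$ are i.i.d.\ with common variance $v_a:=\mathrm{Var}(\mathrm{ReLU}(x^n))$, and independence kills the cross terms, so for each output coordinate $j$, $\hat\sigma_j^2=\mathrm{Var}(z^{n+1}_j)=v_a\sum_k (W^n_{jk})^2$. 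Ignoring the sampling error, BatchNorm rescales coordinate $j$ to unit variance before the affine map, hence $\mathrm{Var}_x(x^{n+1}_j)=(\gamma^{n+1}_j)^2$.

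Next I would differentiate. Again ignoring the sampling error (so that the BatchNorm Jacobian on coordinate $j$ is multiplication by $\gamma^{n+1}_j/\hat\sigma_j$), the chain rule through BatchNorm, $W^n$, and ReLU gives
\[
g^n_k=\mathbf{1}[x^n_k>0]\,\sum_j g^{n+1}_j\,\frac{\gamma^{n+1}_j}{\hat\sigma_j}\,W^n_{jk}\;=:\;\mathbf{1}[x^n_k>0]\,A_k .
\]
Treating the mask $\mathbf{1}[x^n_k>0]$ as independent of $W^n$ and of the backward signal $A_k$ (the latter is what the hypothesis ``$g^{n+1}$ uncorrelated with the input'' buys us, used here at the level of second moments), and writing $p:=P(x^n_k>0)=1-\phi(-\mu^n/\sigma^n)$, the variance-of-a-product identity gives
\[
\mathrm{Var}_g(g^n_k)=p\,\mathrm{Var}(A_k)+p(1-p)\,(\mathbb{E}A_k)^2\;\ge\;p\,\mathrm{Var}(A_k).
\]
Discarding the nonnegative term that comes from the Bernoulli variance $p(1-p)$ of the ReLU mask is the one genuine inequality in the proof; it is tight exactly when the backward signal is centered.

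Then I would assemble the ratio. Conditioning on $w$, $\mathrm{Var}(A_k)=\sum_{j,j'}\mathrm{Cov}(g^{n+1}_j,g^{n+1}_{j'})\frac{\gamma^{n+1}_j\gamma^{n+1}_{j'}}{\hat\sigma_j\hat\sigma_{j'}}W^n_{jk}W^n_{j'k}$; summing over $k$ and using $\hat\sigma_j^2=v_a\sum_k (W^n_{jk})^2$, the diagonal part telescopes to $v_a^{-1}\sum_j(\gamma^{n+1}_j)^2\mathrm{Var}_g(g^{n+1}_j)=v_a^{-1}\sum_j\mathrm{Var}_x(x^{n+1}_j)\mathrm{Var}_g(g^{n+1}_j)$, while every off-diagonal term is proportional to $\sum_k W^n_{jk}W^n_{j'k}$ with $j\neq j'$ and vanishes in expectation over the i.i.d.\ mean-zero weight initialization. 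Combined with $\mathrm{Var}_x(x^n_k)=(\sigma^n)^2$ this yields
\[
\mathbb{E}_w\!\left[\frac{\sum_k \mathrm{Var}_x(x^n_k)\mathrm{Var}_g(g^n_k)}{\sum_j \mathrm{Var}_x(x^{n+1}_j)\mathrm{Var}_g(g^{n+1}_j)}\right]\;\ge\;\frac{(\sigma^n)^2\,p}{v_a}\;=\;\frac{(\sigma^n)^2\bigl(1-\phi(-\mu^n/\sigma^n)\bigr)}{\mathrm{Var}(\mathrm{ReLU}(x^n))}.
\]
Finally I would substitute Proposition~\ref{thm:prop1} with $\mu=\mu^n,\sigma=\sigma^n$ and rewrite in terms of $R=\mu^n/(\sqrt2\sigma^n)$ via $1-\phi(-\mu/\sigma)=\tfrac12\erfc(-R)$, $e^{-\mu^2/2\sigma^2}=e^{-R^2}$, $\mu=\sqrt2\,\sigma R$; the $(\sigma^n)^2$ cancels, $\mathrm{Var}(\mathrm{ReLU}(x^n))=\tfrac{\sigma^2}{2}\bigl[(1+2R^2)\erfc(-R)+2R\,e^{-R^2}/\sqrt\pi-(R\erfc(-R)+e^{-R^2}/\sqrt\pi)^2\bigr]$, and the right-hand side becomes exactly $C(R)$.

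The main obstacle is the bookkeeping inside $\mathbb{E}_w$: the quantity being averaged is a ratio, so $\mathbb{E}_w$ does not pass through it, and one has to argue honestly that the cross-coordinate correlations of $g^{n+1}$ average out against the random columns of $W^n$ (e.g.\ that the denominator is, to leading order, independent of $W^n$ while the off-diagonal numerator has mean zero). If instead one adopts the natural mean-field reading of a ``repetitive'' architecture in which the coordinates of $g^{n+1}$ are mutually uncorrelated, this term is identically zero and the argument reduces to the single Bernoulli-mask inequality above.
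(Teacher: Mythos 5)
Your proof is correct and lands on the same constant $C(R)$, but it places the inequality in a genuinely different spot than the paper does. The paper fixes a single unblocked input index $i$, writes $\mathrm{Var}(g_i^n)=\sum_j \mathrm{Var}(g_j^{n+1})(\gamma_j^{n+1})^2\,\mathbb{E}_w[1/(\hat\sigma_j^{n+1})^2]\,\mathbb{E}_w[(W^n_{ij})^2]$ (which already requires factoring $\hat\sigma_j^{n+1}$ apart from $W^n_{ij}$, on which it depends), and then gets the lower bound from Jensen's inequality $\mathbb{E}_w[1/(\hat\sigma_j^{n+1})^2]\ge 1/\mathbb{E}_w[(\hat\sigma_j^{n+1})^2]$; the ReLU mask contributes no slack there because $g^{n+1}$ is assumed zero-centered, so the paper's multiplication by $p=1-\phi(-\mu^n/\sigma^n)$ is exact. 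You instead sum over the input index first, so that $\sum_k (W^n_{jk})^2/\hat\sigma_j^2 = 1/\mathrm{Var}(\mathrm{ReLU}(x^n))$ cancels exactly for every realization of $W$: no Jensen step, no independence factorization, and the diagonal part of the numerator becomes deterministic in $w$, which also cleanly disposes of the expectation-of-a-ratio issue that the paper's proof glosses over. Your one inequality --- dropping the $p(1-p)(\mathbb{E}A_k)^2$ term from the Bernoulli-mask decomposition --- is nonnegative and in fact vanishes under the paper's zero-centered-gradient assumption, so your bound is at least as tight. The off-diagonal terms you flag at the end are handled in the paper by assuming the coordinates of $g^{n+1}$ are mutually independent, i.e.\ exactly the ``mean-field reading'' you offer as the fallback, so that concern is resolved by the stated hypotheses rather than by averaging over weights. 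The one thing the paper's Jensen formulation buys that yours does not is reusability: Theorem \ref{thm:upper_bound} proceeds by bounding the same quantity $\mathbb{E}_w[1/(\hat\sigma_j^{n+1})^2]$ from above via a sharpened Jensen inequality, whereas your exact cancellation would have to be abandoned there.
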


\begin{proof}
See Appendix \ref{prf:basic}.
\end{proof}

\begin{corollary}
\label{thm:R}
$C(R)$ strictly decreased with $\lim_{R \to -\infty} C(R) = \infty $ (blocked), $C(0) = \pi/(\pi-1)$ (zero-centered), and $\lim_{R \to \infty} C(R) = 1 $ (pseudo-linear).  
\end{corollary}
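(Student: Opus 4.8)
The plan is to first rewrite $C(R)$ as a ratio of two elementary one–dimensional quantities via Proposition~\ref{thm:prop1}, then to read off the value at $0$ and the two limits, and finally to establish strict monotonicity separately. Write $\phi$ for the standard normal CDF (as in the statement), $\varphi$ for the standard normal density, and set $t := \sqrt 2\,R$. Applying Proposition~\ref{thm:prop1} to $Y\sim N(\mu,\sigma^2)$ with $\mu/\sigma = t$, and using $\phi(-\mu/\sigma) = \tfrac12\erfc(R)$ together with $\erfc(-R) = 2-\erfc(R) = 2\phi(t)$, a direct rearrangement shows that the numerator of $C(R)$ equals $\erfc(-R) = 2\phi(t)$ and its denominator equals $2\,\mathrm{Var}(X)/\sigma^2$. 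Since $X = \mathrm{ReLU}(Y)$ has the same law as $\sigma\cdot\mathrm{ReLU}(t+Z)$ for $Z\sim N(0,1)$, so that $\mathrm{Var}(X)/\sigma^2 = u(t)$ with $u(t):=\mathrm{Var}(\mathrm{ReLU}(t+Z))$, this yields the clean identity
\begin{equation}
C(R) = \frac{\phi(t)}{u(t)},\qquad u(t) = (t^2+1)\phi(t) + t\,\varphi(t) - \big(t\,\phi(t)+\varphi(t)\big)^2,\qquad t=\sqrt 2\,R,
\end{equation}
which is exactly the ratio of the backward attenuation factor $\mathbb{E}[\mathbf{1}\{Y>0\}]=\phi(t)$ to the forward attenuation factor $\mathrm{Var}(X)/\mathrm{Var}(Y)=u(t)$ appearing in Theorem~\ref{thm:basic}.

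From this form the pointwise claims are immediate. At $t=0$: $\phi(0)=\tfrac12$, $\varphi(0)^2=\tfrac1{2\pi}$, $\mathbb{E}[\mathrm{ReLU}(Z)^2]=\tfrac12$, hence $u(0)=\tfrac12-\tfrac1{2\pi}=(\pi-1)/(2\pi)$ and $C(0)=\pi/(\pi-1)$. As $t\to\infty$ (i.e.\ $R\to\infty$), $\phi(t)\to1$ while $t^k\varphi(t)\to0$ and $t^k(1-\phi(t))\to0$ for every $k$, so each correction term vanishes, leaving $u(t)=(t^2+1)-t^2+o(1)\to\mathrm{Var}(t+Z)=1$, and therefore $C(R)\to1$. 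The $R\to-\infty$ limit is the delicate one, because in the original expression the two leading terms of the denominator of $C(R)$ cancel; in the probabilistic form this is transparent. Writing $r(t):=\varphi(t)/\phi(t)$, $\bar m(t):=t+r(t)=\mathbb{E}[\,t+Z\mid t+Z>0\,]$ and $V(t):=1-r(t)\bar m(t)=\mathrm{Var}(\,t+Z\mid t+Z>0\,)$, one has $u(t)/\phi(t)=V(t)+(1-\phi(t))\,\bar m(t)^2$, and the Mills–ratio expansion $r(t)=|t|+|t|^{-1}+O(|t|^{-3})$ as $t\to-\infty$ gives $\bar m(t)\sim 1/|t|$ and $V(t)\sim 1/t^2$, so $u(t)/\phi(t)\sim 2/t^2$ and hence $C(R)=\phi(t)/u(t)\sim t^2/2=R^2\to\infty$.

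For strict monotonicity it suffices, since $R\mapsto t$ is increasing, to show that $t\mapsto u(t)/\phi(t)$ is strictly increasing, i.e.\ $(\log u-\log\phi)'>0$. Differentiating under the expectation (with $X=(t+Z)^+$, $m:=\mathbb{E}X$): $m'(t)=\phi(t)$ and $(\mathbb{E}X^2)'(t)=2m(t)$, so $u'(t)=2m(t)-2m(t)\phi(t)=2m(t)(1-\phi(t))$; combined with $r'=-r\bar m$ and $\bar m'=V$, this collapses the derivative to
\begin{equation}
\left(\frac{u}{\phi}\right)'(t) = (1-\phi(t))\,\bar m(t)\,(1+V(t)) - V(t)\,r(t).
\end{equation}
Using $1+V=2-r\bar m$ and the elementary identity $\varphi(t)-t(1-\phi(t))=m(t)-t>0$ (the Mills bound, trivial for $t\le0$), the positivity of this expression is equivalent to
\begin{equation}
2\,(1-\phi(t))\,\bar m(t) + r(t)\,\bar m(t)\,\big(m(t)-t\big) > r(t).
\end{equation}
Both terms on the left are positive, so this is a genuine inequality only where the right-hand side is comparatively large: the hard part is establishing it as $t\to-\infty$, where (as the Step-3 asymptotics indicate) the left side exceeds $r(t)$ only by a factor $1+o(1)$, so a sufficiently sharp two-term Mills-ratio estimate is required, whereas for $t\ge 0$ a crude bound together with a case split at $t=0$ suffices. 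Proving this final inequality is the main obstacle; every reduction leading up to it is routine.
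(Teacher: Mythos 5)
Your reformulation $C(R)=\phi(t)/u(t)$ with $t=\sqrt{2}R$ and $u(t)=\mathrm{Var}(\mathrm{ReLU}(t+Z))$ is correct, and the three pointwise claims are handled cleanly and completely: $C(0)=\pi/(\pi-1)$ and $\lim_{R\to\infty}C(R)=1$ follow at once, and your law-of-total-variance identity $u(t)/\phi(t)=V(t)+(1-\phi(t))\bar m(t)^2$ combined with the two-term Mills expansion gives the sharper asymptotic $C(R)\sim R^2\to\infty$. This is a genuinely different route from the paper, which differentiates the denominator and invokes L'H\^opital; your probabilistic version is arguably more robust here, since the paper's displayed derivative in fact drops a factor of $R$ (it should read $2(2-\erfc(-R))(R\erfc(-R)+e^{-R^2}/\sqrt{\pi})$), and the delicate cancellation that you isolate explicitly as $V(t)+(1-\phi(t))\bar m(t)^2\sim 2/t^2$ is exactly what makes that step fragile in the algebraic form.

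The gap is strict monotonicity. Your reductions are algebraically correct — $u'(t)=2m(t)(1-\phi(t))$, the collapse of $(u/\phi)'$ to $(1-\phi)\bar m(1+V)-Vr$, and the equivalence with $2(1-\phi)\bar m+r\bar m(m-t)>r$ all check out — but you then stop, acknowledging that the inequality is unproven precisely in the regime $t\to-\infty$ where both sides agree to leading order and a sharp two-term Mills-ratio bound is needed. As written, the first assertion of the corollary ("$C(R)$ strictly decreases") is therefore not established; reducing a claim to another unproven inequality, however plausible and numerically verified, is not a proof. It is worth noting that the paper's own proof is silent on monotonicity as well (positivity of the denominator's derivative is used only to justify L'H\^opital and does not imply $C'<0$), so you have attempted strictly more than the reference argument — but the attempt is not finished, and finishing it (e.g.\ by converting your target inequality into the equivalent form $u'(t)\phi(t)>u(t)\varphi(t)$ and deploying explicit upper and lower Mills bounds on $\varphi(t)/\phi(t)$ for $t<0$) is the one substantive piece of work remaining.
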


\begin{proof}
The cases $R=0$ and $R\to \infty$ is trivial. See Appendix \ref{prf:R} for the case of $R\to -\infty$, 
\end{proof}

\begin{theorem}
\label{thm:upper_bound}
Under the assumptions of Theorem 1, let the weight also be normally distributed. Then, for any $0<\delta<1$, with probability at least $1-d_{n+1} \mathrm{exp}(-(d_n/2)(\delta -1-\mathrm{ln}(\delta)))$, we have:  

\begin{equation}
C(R) \leq \mathbb{E}_w\bigg[\frac{\sum_i \mathrm{Var}_x(x_i^n) \mathrm{Var}_g(g^n_i)}{\sum_j \mathrm{Var}_x(x_j^{n+1}) \mathrm{Var}_g(g^{n+1}_j)}\bigg] \leq C(R) 
\bigg(1 + \frac{2(1+\frac{\mu_w^2}{\sigma_w^2}) (1+2\frac{\mu_w^2}{\sigma_w^2})}{d_n^3 \delta^3} \bigg)
\end{equation}

where $\mu_w,\sigma_w^2$ are the mean and variance of the weight distribution.

\end{theorem}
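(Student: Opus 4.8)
The lower bound is nothing but Theorem~\ref{thm:basic}, so the content is entirely in the upper bound. The starting point is the identity underlying Theorem~\ref{thm:basic}: for a fixed weight realization $w$, the quantity inside $\mathbb{E}_w[\cdot]$ equals $C(R)$ times a correction that depends on $w$ only through the batch-normalization scales $\hat\sigma_j$, i.e. through the squared row norms $s_j:=\|W^n_{j\cdot}\|^2=\sum_i (W^n_{ji})^2$; Theorem~\ref{thm:basic} amounts to the statement that the $w$-average of this correction is at least $1$, with equality in the equal-norm (infinite-width) idealization where every $s_j$ coincides with its common mean. The plan is therefore: (i) isolate a high-probability event $\mathcal G$ on which every $s_j$ is within a constant factor of its mean; and (ii) on $\mathcal G$, expand the correction in the centred fluctuations $\xi_j:=s_j-\mathbb{E}s_j$ and bound the remainder.

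For step (i): since $W^n_{ji}\sim N(\mu_w,\sigma_w^2)$ are i.i.d., $s_j/\sigma_w^2$ is a noncentral $\chi^2$ with $d_n$ degrees of freedom and noncentrality $d_n\mu_w^2/\sigma_w^2$. A noncentral $\chi^2$ stochastically dominates the central $\chi^2$ with the same degrees of freedom, so $\mathbb{P}[s_j<\delta d_n\sigma_w^2]\le \mathbb{P}[\chi^2_{d_n}<\delta d_n]$, and the standard Chernoff bound for the $\chi^2$ lower tail gives $\mathbb{P}[\chi^2_{d_n}<\delta d_n]\le \exp\!\big(-\tfrac{d_n}{2}(\delta-1-\ln\delta)\big)$ for $0<\delta<1$, which is exactly where the exponent in the statement comes from. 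A union bound over the $d_{n+1}$ output channels makes $\mathcal G:=\{\,s_j\ge \delta d_n\sigma_w^2\ \text{for all }j\,\}$ an event of probability at least $1-d_{n+1}\exp\!\big(-\tfrac{d_n}{2}(\delta-1-\ln\delta)\big)$, matching the claim. (The outer $\mathbb{E}_w$ should be read as the expectation restricted to $\mathcal G$; because the correction is homogeneous of degree zero in the $s_j$, the contribution of $\mathcal G^c$ is also controlled, but it is cleanest to prove the two-sided bound conditionally on $\mathcal G$.)

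For step (ii): a routine fourth-moment computation gives $\mathbb{E}s_j=d_n(\sigma_w^2+\mu_w^2)$ and $\mathbb{E}\xi_j^2=\mathrm{Var}(s_j)=2d_n\sigma_w^2(\sigma_w^2+2\mu_w^2)$; here $\mathbb{E}s_j$ is proportional to $1+\mu_w^2/\sigma_w^2$ and $\mathrm{Var}(s_j)$ to $1+2\mu_w^2/\sigma_w^2$, which is where the two bracketed factors in the bound come from. Expanding the correction around $\xi=0$: the zeroth-order term is $1$; the first-order term is linear in the $\xi_j$ and has zero $w$-expectation, since the $\xi_j$ are centred and (by row-exchangeability of the Gaussian ensemble) uncorrelated with the first-order coefficients; the second-order remainder is, on $\mathcal G$, bounded termwise by a universal constant times $\mathbb{E}\xi_j^2$ and $\mathbb{E}s_j$ divided by three copies of the worst-case denominator $\delta d_n\sigma_w^2$, together with the $1/d_n$ normalizations coming from the averages over channels and inputs. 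Collecting the powers of $d_n$ and $\sigma_w$ and checking the universal constant is at most $1$ yields $\mathbb{E}_w[\text{correction}\mid\mathcal G]\le 1+\frac{2(1+\mu_w^2/\sigma_w^2)(1+2\mu_w^2/\sigma_w^2)}{d_n^3\delta^3}$, and multiplying through by $C(R)$ finishes the proof.

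The main obstacle is step (ii): one must track the $d_n$-powers exactly (the $d_n^{-3}$ rate cannot be lost), confirm that the first-order term genuinely has zero $w$-mean, and bound the second-order coefficients uniformly over the unspecified downstream gradient variances $\mathrm{Var}_g(g^{n+1}_j)$ — the last point being what forces the estimate through the ratio-of-sums rather than through numerator and denominator separately. A secondary bookkeeping issue is the conditioning on $\mathcal G$ noted above.
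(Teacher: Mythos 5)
Your overall strategy is the paper's own. Step (i) is verbatim the paper's argument: $s_j/\sigma_w^2$ is a noncentral $\chi^2$ that stochastically dominates the central $\chi^2_{d_n}$, the Chernoff bound $M_X(t)e^{-td_n\delta}$ with $t=(1-1/\delta)/2$ produces the exponent $\tfrac{d_n}{2}(\delta-1-\ln\delta)$, and a union bound over the $d_{n+1}$ output channels gives the stated probability. Step (ii) is also the paper's step in disguise: your ``expand in the centred fluctuations, the first-order term has zero mean, bound the second-order remainder by the worst-case curvature on $\mathcal G$'' is exactly the sharpened Jensen inequality $\mathbb{E}[f(X)]-f(\mathbb{E}X)\le\mathrm{Var}(X)\sup(f''/2)$ that the paper cites, applied to $f(x)=1/x$ and $X=s_j$ with $\sup(f''/2)=(\sigma_w^2 d_n\delta)^{-3}$ on $\mathcal G$; your moments $\mathbb{E}s_j=d_n(\sigma_w^2+\mu_w^2)$ and $\mathrm{Var}(s_j)=2d_n\sigma_w^4(1+2\mu_w^2/\sigma_w^2)$ are the ones used there.

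The genuine gap is the one you defer to at the end: the power count in step (ii) is asserted, not performed, and when performed it does not give $d_n^{-3}$. Using your own ingredients, the per-channel quantity that enters $\mathrm{Var}(g_i^n)$ satisfies, on $\mathcal G$,
\[
\mathbb{E}\Big[\frac{1}{s_j}\Big]\,\mathbb{E}[(W^n_{ij})^2]\;\le\;\frac{1}{d_n}+\frac{\mathrm{Var}(s_j)\,\mathbb{E}[(W^n_{ij})^2]}{(\sigma_w^2 d_n\delta)^3}\;=\;\frac{1}{d_n}\bigg(1+\frac{2\big(1+\tfrac{\mu_w^2}{\sigma_w^2}\big)\big(1+2\tfrac{\mu_w^2}{\sigma_w^2}\big)}{d_n\,\delta^3}\bigg),
\]
a relative correction of order $d_n^{-1}$, not $d_n^{-3}$. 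There is no additional $1/d_n$ ``from the averages over channels and inputs'': the correction factor is multiplicative in every term of the sums over $i$ and $j$ and passes through the ratio unchanged, so the per-channel exponent is the final exponent. (For what it is worth, the paper's own derivation reaches $d_n^{-3}$ by pairing the single-entry variance $2\sigma_w^4(1+2\mu_w^2/\sigma_w^2)$ with the cube of the whole-sum lower bound $(\sigma_w^2 d_n\delta)^3$, which is precisely the bookkeeping your sketch leaves implicit.) So: correct ingredients and the same method as the paper, but the sentence ``checking the universal constant is at most $1$'' is where the entire quantitative content of the theorem lives, and writing it out yields a weaker (larger) upper bound than the one stated. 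You should also make explicit how the unconditional expectation $\mathbb{E}_w$ in the statement is reconciled with conditioning on $\mathcal G$, since the bound is claimed for $\mathbb{E}_w$ itself with high probability over the event, not for $\mathbb{E}_w[\,\cdot\mid\mathcal G]$.
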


\begin{proof}
See Appendix \ref{prf:upper_bound}.
\end{proof}

\begin{remark}
    The upper bound presented in Theorem \ref{thm:upper_bound} is stochastic due to the possibility of batch normalization statistics diverging. However, the probability of divergence exponentially decreases with respect to the network width ($d_n$). For example, when setting $d_n=d_{n+1}=10^2$, $\mu_w=0$, and $\delta=0.3$, there is at least $1-10^{-9}$ chance of having a set of parameters that exhibits the explosion rate between $C(R)$ and $C(R)(1+10^{-4})$. Our results align with those of \citep{mean_field}, which calculated the explosion rate to be $\sqrt{\pi/(\pi-1)}$ under the assumption of infinite width, and zero-centered i.i.d. normal activations and parameters.
\end{remark}

\begin{remark}
    The network exhibits a `pseudo-linear' behavior when the majority of activations lie within the linear section of the nonlinearity and the nonlinearity can be effectively approximated by a linear function \citep{exploding_exist}. As $R \to \infty$, ReLU approaches the identity function, resulting in a layer that does not cause gradient explosion. However, it also fails to act as a valid layer contributing to the effective depth.
\end{remark}

\begin{remark}
    Independence between inputs is a widely adopted assumption in deep learning theory \citep{indep_ex2, indep_ex3, exploding_exist, indep_ex1, mean_field} due to the inherent challenges of defining and analyzing the dependency between input distributions. However, both practically and theoretically, we observed that the correlation between activation can have a significant impact on the gradient flow, even leading to a weak vanishing gradient. Theorem \ref{thm:corr} demonstrates this phenomenon by examining the extreme opposite case, and the negative slope of $\mathrm{Var}(g)$ is illustrated in Figure \ref{fig:xwg}.
\end{remark}

\begin{theorem}
\label{thm:corr}
Under the assumptions of Theorem 1, let inputs are perfectly correlated. For zero-centered case, we have:

\begin{equation}
C_{full}(0) = \frac{\pi}{1+\pi} \leq \mathbb{E}_w\bigg[\frac{\sum_i \mathrm{Var}_x(x_i^n) \mathrm{Var}_g(g^n_i)}{\sum_j \mathrm{Var}_x(x_j^{n+1}) \mathrm{Var}_g(g^{n+1}_j)}\bigg]
\end{equation}

where equality holds at $d_n \to \infty$. 
\end{theorem}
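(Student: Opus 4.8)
The plan is to re-run the forward/backward variance-propagation argument behind Theorem~\ref{thm:basic}, now with the perfectly-correlated input model, and to track exactly where the i.i.d.\ assumption was being used. In the zero-centered regime $\mu^n=0$, perfect correlation forces $x^n_i=\sigma^n\xi$ for a single $\xi\sim N(0,1)$, so every rectified coordinate collapses to the same variable $z:=\mathrm{ReLU}(\sigma^n\xi)$, and the ReLU derivative at layer $n$ becomes a \emph{single shared} Bernoulli mask $B:=\mathbf 1[\xi>0]$ rather than $d_n$ independent ones. Proposition~\ref{thm:prop1} at $\mu=0$ supplies the moments $\mathbb E[z]=\sigma^n/\sqrt{2\pi}$, $\mathbb E[z^2]=(\sigma^n)^2/2$, and $\mathrm{Var}(z)=\mathbb E[z^2]-\mathbb E[z]^2$; in particular $\mathbb E[z^2]/\mathbb E[z]^2=\pi$, which is where the $\pi$ in $C_{full}(0)$ ultimately comes from.

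For the forward pass, since all rectified inputs coincide, $h_j:=(W^n\mathrm{ReLU}(x^n)+b^n)_j=S_j\,z+b^n_j$ with $S_j:=\sum_i W^n_{ji}$. Ignoring the batch-sampling error gives $\hat\mu_j=S_j\mathbb E[z]+b^n_j$ and $\hat\sigma_j^2=S_j^2\,\mathrm{Var}(z)$, hence $x^{n+1}_j=\mathrm{sign}(S_j)\tfrac{z-\mathbb E[z]}{\sqrt{\mathrm{Var}(z)}}\gamma^{n+1}_j+\beta^{n+1}_j$ and $\mathrm{Var}_x(x^{n+1}_j)=(\gamma^{n+1}_j)^2$, exactly as in Theorem~\ref{thm:basic}. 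The one structural change is that the batch-norm normalizer is now governed by the squared row-sum $(\sum_i W^n_{ji})^2$ rather than by $\sum_i (W^n_{ji})^2$.

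Backpropagating, $\partial L/\partial h_j=(\gamma^{n+1}_j/\hat\sigma_j)g^{n+1}_j$, then through $(W^n)^{\!\top}$, and finally through the shared mask: $g^n_i=B\sum_j (\gamma^{n+1}_j/\hat\sigma_j)\,W^n_{ji}\,g^{n+1}_j$. This is precisely where the independence that made the i.i.d.\ computation collapse cleanly is lost — the mask $B$ no longer averages out across channels, and when the perfectly-correlated forward structure is carried through consistently the activation \emph{mean} $\mathbb E[z]$ re-enters $\mathrm{Var}_g(g^n_i)$ additively instead of cancelling. Assembling $\sum_i\mathrm{Var}_x(x^n_i)\mathrm{Var}_g(g^n_i)$ over $\sum_j\mathrm{Var}_x(x^{n+1}_j)\mathrm{Var}_g(g^{n+1}_j)$, the factor $\mathrm{Var}(z)=\mathbb E[z^2]-\mathbb E[z]^2$ appearing in Theorem~\ref{thm:basic} is replaced by $\mathbb E[z^2]+\mathbb E[z]^2$, so the idealized ratio is $\mathbb E[z^2]/(\mathbb E[z^2]+\mathbb E[z]^2)=\pi/(1+\pi)$. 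Taking $\mathbb E_w[\cdot]$ and invoking concentration — as $d_n\to\infty$ the row sums-of-squares concentrate at $d_n\sigma_w^2$ while the normalized row-sums converge to standard Gaussians — pins the ratio at the deterministic value $\pi/(1+\pi)$; for finite width one only controls the relevant weight functionals in one direction, which yields the stated lower bound $C_{full}(0)\le\mathbb E_w[\cdot]$ by convexity/Jensen, with equality exactly in the $d_n\to\infty$ limit.

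The main obstacle is the backward step: because every channel shares one ReLU event and one activation variable, the argument that made the cross terms vanish and the $\mathbb E[z]^2$ cancel in the i.i.d.\ proof of Theorem~\ref{thm:basic} has to be redone from scratch, and one must show that the mean of the rectified signal resurfaces with a $+$ sign rather than a $-$ sign — this sign flip is the entire content of $C_{full}(0)=\pi/(1+\pi)<1$ versus $C(0)=\pi/(\pi-1)>1$. The secondary difficulty is justifying the limit/expectation exchange in the width argument, since for finite $d_n$ the weight functionals involving $1/(\sum_i W^n_{ji})^2$ are heavy-tailed; this is precisely why the theorem asserts exact equality only in the infinite-width limit.
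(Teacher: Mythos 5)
Your forward‑pass setup is fine as far as it goes, but the proposal has a genuine gap at exactly the step you yourself flag as ``the main obstacle'': the flip from $\mathrm{Var}(z)=\mathbb{E}[z^2]-\mathbb{E}[z]^2$ to $\mathbb{E}[z^2]+\mathbb{E}[z]^2$ is asserted, not derived --- and in the model you set up it cannot be derived. Once you write $\hat{x}^{n+1}_j=S_j z+b^n_j$ with $S_j=\sum_i W^n_{ij}$ and a single shared $z=\mathrm{ReLU}(\sigma^n\xi)$, the batch-norm denominator is pinned to $\hat\sigma_j^2=S_j^2\,\mathrm{Var}(z)$. Feeding that into the same Jensen step as Theorem \ref{thm:basic} ($\mathbb{E}_w[1/S_j^2]\ge 1/\mathbb{E}_w[S_j^2]=1/(d_n\sigma_w^2)$) and multiplying by the ReLU pass probability $1/2$ gives a ratio of $\tfrac{1}{2}\sigma^2/\mathrm{Var}(z)=\pi/(\pi-1)$, i.e.\ the \emph{same} constant as the uncorrelated case, not $\pi/(\pi+1)$. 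The constant is determined entirely by $\mathbb{E}_w[\hat\sigma_j^2]$, a forward-pass quantity, so no reworking of the backward pass or of the shared Bernoulli mask can produce the claimed sign flip from your starting point; attributing it to the backward step is a misdiagnosis.

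The paper's mechanism lives in structure your model discards. There, ``perfectly correlated'' means $x^n=bt$ for a fixed sign/scale pattern $t\in\mathbb{R}^{d_n}$, so which coordinates pass ReLU depends on $\mathrm{sign}(b)$ relative to $\mathrm{sign}(t_i)$: for $b>0$ the pre-BN activation is $bP_j$ and for $b<0$ it is $-bN_j$, with $P_j=\sum_i W^n_{ij}\mathrm{ReLU}(t_i)$ and $N_j=\sum_i W^n_{ij}\mathrm{ReLU}(-t_i)$. This yields $\hat\sigma_j^2=\tfrac12(P_j^2+N_j^2)+\hat\mu_j^2-\sqrt{2/\pi}\,\hat\mu_j(P_j+N_j)$, and the constant $\pi/(1+\pi)$ is obtained by taking $\mathbb{E}_w$ and treating the cross term $\mathbb{E}_w[\hat\mu_j(P_j+N_j)]$ as vanishing, so that the $+\hat\mu_j^2$ contribution (your $+\mathbb{E}[z]^2$) survives with a plus sign. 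In your single-scalar model $t_i\equiv\sigma^n>0$, hence $N_j=0$ and $\hat\mu_j\propto P_j$, and the three terms recombine exactly into $S_j^2\mathrm{Var}(z)$: the mechanism is simply absent. If you rebuild the argument on the $x^n=bt$ model, you should also scrutinize that cross-term step carefully, since $\hat\mu_j=(P_j+N_j)/\sqrt{2\pi}$ makes the two factors proportional rather than uncorrelated over the weights; that step, not the heavy-tailedness of $1/S_j^2$, is where the entire content of $C_{full}(0)<1$ is decided. Your concentration discussion for the $d_n\to\infty$ equality is reasonable in outline but moot until the finite-width constant is actually established.
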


\begin{proof}
See Appendix \ref{prf:corr}
\end{proof}

\begin{figure}
\includegraphics[width=0.9\textwidth]{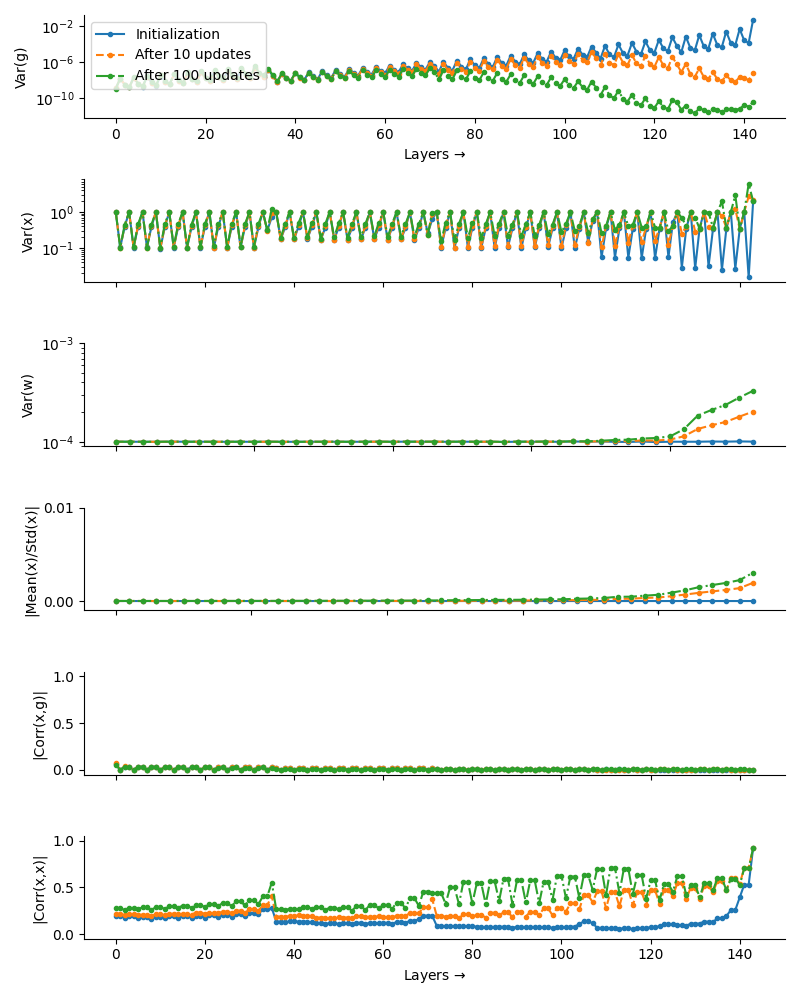}
\caption{A sample chart of the variance of the gradient, activation, weight, mean-std ratio, the correlation between activation and gradient ($\sum_{i=1}^C |\mathrm{Corr}(x_i, g_i)|/C$), and the average correlation between activations ($\sum_{i\neq j=1}^C |\mathrm{Corr}(x_i, x_j)|/(C(C-1))$) during the early stages of training. Note that the left side represents the layer closer to the output. All weights are initialized from $N(0,0.01^2)$ for better visualization. \\
In brief, the top 2 graphs represent the evolution of the gradient and activation flow, while the remaining graphs represent internal values that potentially contribute to gradient explosion or vanishing. Various factors may help mitigate gradient explosion during the initialization stage. But the correlation between activations appears to be the primary contributor to gradient decrease. It modifies the variance after linear transformation and reduces the norm of the gradient flow while maintaining stability in the forward flow, given that the variance of the sum of dependent variables differs from that of independent variables. (See Theorems \ref{thm:basic} and \ref{thm:corr}).}
\label{fig:xwg}
\end{figure}

\begin{SCfigure}
\includegraphics[width=6cm]{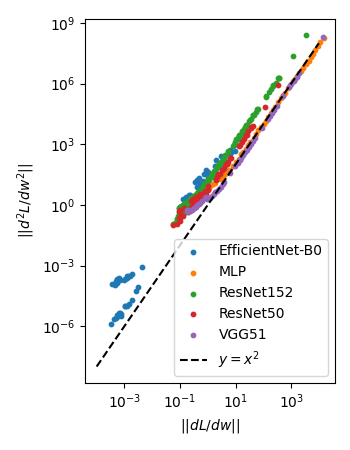}
\caption{The scatter plot that shows a relationship between the norm of the gradient ($||\frac{dL}{dw}||$) and the second-order derivative ($||\frac{d^2L}{dw^2}||$) for various architectures. Both axes are plotted on a log scale, and the dashed line represents $y=x^2$. To alleviate computational overhead, we randomly sampled 1,000 parameters for each layer to compute the first and second-order derivatives. We found that the magnitude of the second-order derivative is approximately proportional to the square of the gradient, in line with the expectation from Lemma \ref{thm:hessian}.}
\label{fig:hessian}
\end{SCfigure}

A neural network employing batch normalization and ReLU may experience a significant gradient explosion at initialization, which, however, rapidly diminishes during training. Maintaining a stable gradient flow is not a `natural' occurrence as several factors may contribute to gradient explosion or vanishing, and these can change over the course of training. These include:

\begin{itemize}
    \item The probability of passing through the ReLU activation function, or the degree of `pseudo-linearity' discussed in Corollary \ref{thm:R}.
    \item The degree of correlation between activations, as explored in Theorem \ref{thm:corr}.
    \item Changes in the network width ($d_n$). Note that our theorem describes the \textbf{sum} across the width instead of the average. For instance, Figure \ref{fig:exploding} demonstrates a step-wise change in the gradient norm at the downsampling layers of the VGG network.
    \item Changes in the magnitude of weight parameters, considering that $dL/dw^n_{ij}= w_{ij}^n dL/dx^{n+1}_j$.
    \item The existence of other unforeseen dependencies among internal variables, such as activations, weights, and gradients.
\end{itemize}

However, the correlation between input activations, as observed in Figure \ref{fig:xwg}, appears to be the most critical factor that reduces the gradient. Many of the assumptions posited in Theorem \ref{thm:basic} seem to remain valid even after the neural network has been trained. Internal variables such as weights, activations, and gradients tend to remain zero-centered and exhibit minimal correlation. Yet, a neural network tends to produce numerous redundant and duplicate signals during training, necessitating `neural network pruning' \citep{pruning}.

\subsection{Why Vanilla Gradient Descent Fails in the Face of Gradient Explosion}

Gradient explosion is not a numerical or computational error. As shown in \citep{exploding_exist}, a small change in the lowest layers can make a huge difference in the upper layers. Yet, we cannot always rely on this value because the gradient is not always a trustworthy estimator for the optimal step size, as it also correlates heavily with the curvature of the loss landscape. \citep{loss_curvature} Take the basic second-order optimization algorithm, $\Delta w = (H+\epsilon I)^{-1}J$ \citep{newton}, where $H$ is the Hessian matrix, $J$ is the Jacobian, and $\epsilon$ is a stability hyperparameter. Since computing the Hessian matrix for a deep neural network with millions or billions of parameters is almost impossible, we assume it to be reasonably bounded by a single scalar value, commonly known as the `learning rate'.

This assumption is approximately valid for parameters in parallel, e.g. the set of neural network parameters within the same layer. However, for parameters connected in series, the second-order derivative can increase \textbf{faster} than the first-order derivative, leading to a smaller optimal step size for parameters with larger gradients. For instance, consider $y=w_2w_1x$, where $w_1=100$ and $w_2=1$. Although $w_2$ has a gradient that's $100$ times larger than that of $w_1$, it has an optimal step size that's $100$ times smaller when considering the second-order term. More formally:

\begin{lemma}
    
\label{thm:hessian}
Let $f$ be a piecewise linear neural network with a single output node ($d_N=1$), and let the loss function be nonlinear and twice differentiable. If the gradient is normally distributed, for $1 \leq n,m \leq N$,

\begin{equation}
\frac{\mathrm{Var}(h_i^n)}{\mathrm{Var}(h_j^m)} = \Big( \frac{\mathrm{Var}(g_i^n)}{\mathrm{Var}(g_j^m)} \Big)^2 
\end{equation}

where $h^n:= d^2L/(dx^n)^2$ is a second-order derivative.

\label{hessian}
\end{lemma}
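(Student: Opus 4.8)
The plan is to isolate all second-order behaviour in the loss nonlinearity — this is possible because a piecewise linear network contributes no curvature of its own — and then to extract the variance identity from the Gaussian fourth-moment relation.

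First I would fix a forward pass together with the linear region of $f$ it falls into. Write $x^N\in\mathbb{R}$ for the scalar output (here $d_N=1$) and regard $L$ as a function of $x^N$. On this region the map $x^n\mapsto x^N$ is affine, so the Jacobian entries $a_i^n := \partial x^N/\partial x_i^n$ are locally constant and $\partial^2 x^N/\partial(x_i^n)^2 = 0$ (the ReLU kink set has measure zero and is discarded). The chain rule applied to $L=L(x^N)$ then gives, for every layer $n$ and coordinate $i$,
\[
g_i^n = \frac{\partial L}{\partial x_i^n} = L'(x^N)\,a_i^n,
\qquad
h_i^n = \frac{\partial^2 L}{\partial (x_i^n)^2} = L''(x^N)\,(a_i^n)^2 ,
\]
so that, eliminating $a_i^n$,
\[
h_i^n \;=\; \frac{L''(x^N)}{\big(L'(x^N)\big)^2}\,(g_i^n)^2 \;=:\; \kappa\,(g_i^n)^2 .
\]
The crucial point is that $\kappa$ depends only on the scalar output $x^N$, hence is common to all coordinates $i$ and all layers $n$.

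Second, I would take variances with the forward output held fixed (equivalently, conditioning on $x^N$, so $\kappa$ is constant), using the hypothesis that $g_i^n$ is normally distributed with zero mean. For a zero-mean Gaussian $Z$ one has $\mathrm{Var}(Z^2)=\mathbb{E}[Z^4]-\big(\mathbb{E}[Z^2]\big)^2 = 3(\mathrm{Var}\,Z)^2-(\mathrm{Var}\,Z)^2 = 2(\mathrm{Var}\,Z)^2$, whence
\[
\mathrm{Var}(h_i^n) = \kappa^2\,\mathrm{Var}\big((g_i^n)^2\big) = 2\kappa^2\,\big(\mathrm{Var}(g_i^n)\big)^2 .
\]
Forming the ratio for any $1\le n,m\le N$ and coordinates $i,j$, the common factor $2\kappa^2$ cancels and leaves $\mathrm{Var}(h_i^n)/\mathrm{Var}(h_j^m) = \big(\mathrm{Var}(g_i^n)/\mathrm{Var}(g_j^m)\big)^2$, which is the claim.

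The step I expect to be the main obstacle is exactly the status of $\kappa = L''(x^N)/\big(L'(x^N)\big)^2$: it is constant within one forward pass but varies across inputs/initializations, so the variance computation must be carried out in a regime where that variation is immaterial — e.g. variances taken across the width of a layer for a fixed sample, or conditional on $x^N$, or via a concentration argument for $x^N$. I would also make explicit that ``the gradient is normally distributed'' is used with zero mean, since that is precisely what turns the fourth moment into $3(\mathrm{Var})^2$ and lets the quadratic relation survive the variance. Finally, I would note that the same chain-rule computation yields $\partial^2 L/\partial x_i^n\partial x_j^n = L''(x^N)\,a_i^n a_j^n$, so the full layer Hessian is the rank-one matrix $L''(x^N)\,a^n (a^n)^{\!\top}$ and $\|H^n\| = \big(|L''(x^N)|/L'(x^N)^2\big)\,\|g^n\|^2$ — the operator/Frobenius-norm counterpart matching the $y=x^2$ line in Figure \ref{fig:hessian}.
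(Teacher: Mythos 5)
Your proposal is correct and follows essentially the same route as the paper's proof: both reduce $h_i^n$ to $\kappa\,(g_i^n)^2$ via the piecewise-linearity of the network and the chain rule, then use the fourth moment of a zero-mean Gaussian (equivalently, the variance of a $\chi^2_1$ variable) to get $\mathrm{Var}((g_i^n)^2)=2\,\mathrm{Var}(g_i^n)^2$ and cancel the common prefactor. Your explicit remarks that the Gaussian must be zero-mean and that $\kappa$ must be treated as fixed (by conditioning on $x^N$) make precise two points the paper's proof leaves implicit.
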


\begin{proof}
See Appendix \ref{prf:hessian}
\end{proof}

Since we observed that the issue of exploding gradients in modern deep learning architectures is not caused by exponentially varying weight or activation, we can anticipate that the second-order derivative experiences an explosive growth rate proportional to the square of the gradient explosion rate. Figure \ref{fig:hessian} presents empirical observations from various architectures, and we hypothesize that this approximately holds true, at least when describing the gradient explosion.

\section{Previous Solutions}
Although the diagnosis of this issue may not have been exhaustive, fortunately, this problem is temporary. While there may exist a mild gradient explosion or vanishing in the middle of the training, practical observations suggest that it is not necessary to normalize it. \citep{reality_check,loss_curvature} Therefore the WarmUp method \citep{warmup} can serve as an effective solution by assigning a very small learning rate during the early stages of training and \textbf{`waiting'} until the learning process stabilizes. 

The WarmUp technique is a strong baseline that widely employed in both small and large batch training. We hypothesize that it could overcome any degree of instability given a sufficiently long WarmUp schedule, as shown in \citep{reality_check,warmup_imagenet}. However, instead of keep extending the schedule, more direct solutions like LARS \citep{LARS} can be preferred in severe cases, as these methods typically offer better stability and require only a negligible amount of additional computation.

Layer-wise Adaptive Rate Scaling (LARS, \citep{LARS}) is a technique designed to maintain the ratio between the gradient and weight of each layer. It has been successfully implemented in several large-scale experiments \citep{lars_ex1,lars_ex2,lars_ex3,lars_ex4,lars_ex5}. However, its application in small-batch experiments has been limited due to the constraints it imposes on training and its potential to induce minor performance degradation \citep{reality_check}. Additionally, it has been observed that LARS needs to operate in conjunction with WarmUp, suggesting that training instability is not sufficiently managed by LARS alone \citep{CLARS}.

Consequently, various enhancements to LARS have been proposed. \citep{LAMB} suggested LAMB, which modified the formula to clip extreme values and applied the technique to the Adam optimizer \citep{adam} for large-scale language model. \citep{LAMBC} introduced LAMBC, which clips the trust ratio of LAMB \citep{LAMB}. \citep{CLARS} developed Complete Layer-wise Adaptive Rate Scaling (CLARS), which utilizes the average per-sample gradient of the norm rather than the minibatch gradient. \citep{AGC} suggested Adaptive Gradient Clipping (AGC), applying LARS only when the gradient norm exceeds a certain threshold and employing a unit-wise gradient norm instead of the layer-wise one. These approaches are compared in Table \ref{tab:compare}. It should be noted that while these techniques differ in the specifics, they are all rooted in the concept of \textbf{`maintaining the ratio between weight and gradient'}, i.e., $\Delta w \propto \frac{||w||}{||g||+\epsilon}g$.

The gradient explosion occurs only when \textbf{both} batch normalization and activation functions are utilized. Although there is an inherent limitation in resolving this problem by linearizing the activation function, \citep{exploding_exist} it is noteworthy that removing the batch normalization layer can be a valid solution if layer statistics can be normalized and maintained by alternative methods. \citep{LayerNorm,GroupNorm,vit,fixup,gradinit}

\section{Solution}
Based on our observations, we strive to incorporate the following principles to address the issue of gradient explosion:

\begin{itemize}
    \item We require a step size that is proportional to the gradient norm for parameters in parallel, while simultaneously being inversely proportional to the gradient norm for parameters in series. As such, we need to employ a layer-wise optimization method to implement this inverse relationship.
    \item For parameters connected in series, the square of the gradient norm seems to better represent the second-order derivative (curvature) than the gradient norm itself.
    
    \item Since the problem is temporal, a temporal solution is more suitable as it is inevitable to encounter a certain degree of error in calculating the optimal step size between layers. We found that the clipping method is generally more effective than the scaling method, even without the assistance of WarmUp.
\end{itemize}

Our proposed algorithm directly embodies these insights. Specifically, we drew from the formula for basic second-order optimization, $\Delta w = (H+\epsilon I)^{-1}J$, where the Hessian is approximated by the square of the gradient norm. To prevent the scale of the gradient from being affected by the weight norm, we normalized it by the norm of weight. We then established an upper limit for the learning rate using this value. The procedure is detailed in Algorithm \ref{alg:lalc}, and the PyTorch \citep{pytorch} implementation is provided in Appendix \ref{appendix:detail}.

\begin{figure}[h]
\begin{algorithm}[H]
\caption{LALC}
\label{alg:lalc}
\begin{algorithmic}

\REQUIRE $w^l$ : Weight matrix of layer l 
\REQUIRE $\gamma_t$ : Learning rate at step t
\REQUIRE $f$ : Gradient optimization algorithm (SGD, Adam, etc.) 
\REQUIRE $\eta$, $\epsilon$  : Hyperparameters
\WHILE{$t<T$ for each layer l}
\STATE $g_t^l \leftarrow \frac{dL}{dw_t^l}$
\STATE $m_t^l \leftarrow f(g^l, w^l_t)$
\STATE $\lambda^l_t \leftarrow  \frac{1}{\eta ||m_t^l||^2/||w_t^l||^2 + \epsilon}$
\STATE $w^l_{t+1} \leftarrow w^l_t - min(\gamma_t,\lambda^l_t) m_t^l$
\ENDWHILE
\end{algorithmic}
\end{algorithm}
\caption{
Our proposed algorithm, inspired by the basic second-order optimization formula. See Section 4 for a detailed explanation, and Appendix \ref{appendix:detail} for the PyTorch \citep{pytorch} implementation.}

\end{figure}

\begin{table}[h]
\centering
\begin{tabular}{| c | c |}
\hline
Algorithm & $\Delta W$  \\ \hline
\makecell{Gradient Descent \\ (Baseline)} & $\gamma_t m_t^l$   \\ \hline
\makecell{LARS \\ \citep{LARS}}  & $\gamma_t \frac{\eta ||w_t^l||}{||g_t^l||+\beta ||w_t^l||+\epsilon}m_t^l$         \\ \hline
\makecell{LARS/LAMB \\ \citep{LAMB}}   & $ \gamma_t \frac{\phi(||w_t^l||)}{||m_t^l||+\epsilon}m_t^l$   \\ \hline
\makecell{LAMBC \\ \citep{LAMBC}}  &  $ \gamma_t min \big( \frac{\phi(||w_t^l||)}{||m_t^l||+\epsilon}, \mu \big) m_t^l$           \\ \hline
\makecell{CLARS \\ \citep{CLARS}}  & $\gamma_t \frac{\eta ||w_t^l||}{\sum_b ||m_{b,t}^l|| / B +\epsilon}m_t^l$            \\ \hline
\makecell{AGC \\ \citep{AGC}}   & $ \gamma_t min \big(\frac{\eta ||w_t^{unit}||}{||m_t^{unit}||+\epsilon}, 1 \big) m_t^{unit}$           \\ \hline
\makecell{LALC \\ (ours)}   & $min \Big( \gamma_t , \frac{1}{\eta ||m_t^l||^2/||w_t^l||^2 + \epsilon}  \Big) m_t^l $            \\ \hline
\end{tabular} \\
\label{tab:compare}
\caption{
A simple comparison between adaptive rate algorithms, where $\beta$ represents the weight decay rate, $\phi$ is a `scaling function' as referred to in \citep{LAMB},  $\mu$ is a clipping hyperparameter, and the rest of the notation follows algorithm \ref{alg:lalc}. Note that CLARS utilizes an average `per-sample gradient' value instead of a minibatch gradient \citep{CLARS}, and AGC employs a `unit-wise ratio' instead of considering all parameters in each layer. \citep{AGC} For precise definitions and formulations, please refer to the original papers.}
\end{table}

\section{Experiments}
In all of our experiments, we utilized the ResNet50 model \citep{ResNet}. For LARS \citep{LARS}, CLARS \citep{CLARS}, and AGC \citep{AGC}, we used a smaller $\eta$ for larger batch sizes. We conducted a hyperparameter search in order of 10, thus careful hyperparameter tuning like \citep{reality_check} could potentially enhance performance. LARC and LAMBC was more robust to changes in batch size, possibly due to their clipping-based methodologies. The experimental results are presented in Figure \ref{fig:CIFAR} and \ref{fig:ImageNet}. Please refer to Appendix \ref{appendix:detail} for details. 

\begin{figure}
         \includegraphics[width=0.49\textwidth]{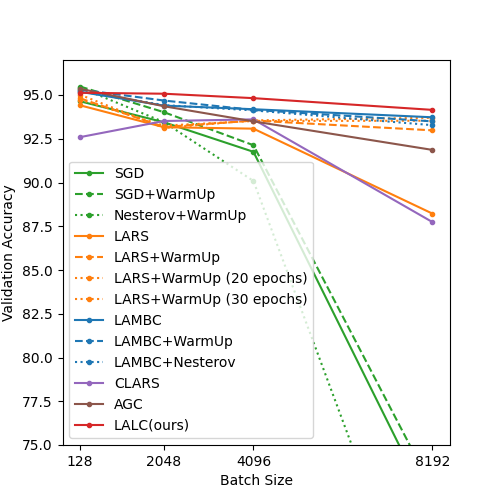}
         \includegraphics[width=0.49\textwidth]{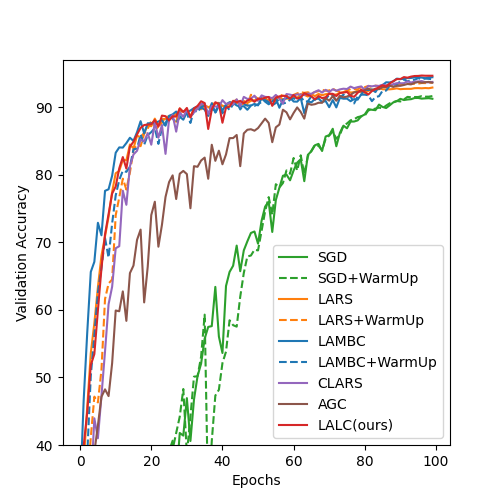}
     \caption{Results of the CIFAR10 experiment for (a) different batch sizes and (b) sample training curves for 4k batch size. All experiments were averaged over 3 runs. Please refer to Appendix \ref{appendix:detail} for details. We observed performance degradation irrespective of the algorithm at $\geq$8k batch size since CIFAR10 only contains 50k training data points.}

\label{fig:CIFAR}
\end{figure}

\begin{figure}
    
\centering
\includegraphics[width=0.7\textwidth]{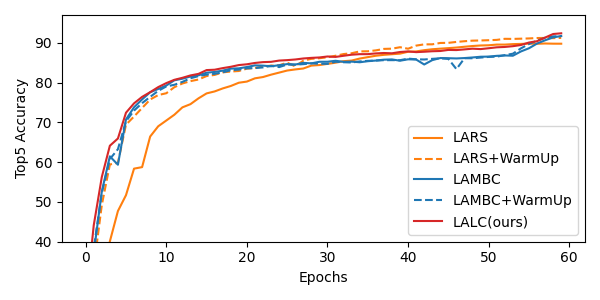}
 \caption{the results of an ImageNet experiment conducted with a batch size of 8k. We observed the similar tendency and training curves with CIFAR10 experiment.}

\label{fig:ImageNet}
\end{figure}

\section{Discussion}

In this paper, we have investigated the occurrence of gradient explosion in neural network architectures utilizing ReLU activation and batch normalization. Our analysis encompasses a wider range of input and weight distributions; however, our findings may not be fully generalized to practical neural networks. Notably, we have observed that the convolutional layer tends to exhibit a lower explosion rate compared to fully connected layers, as demonstrated in Figure \ref{fig:exploding}. This discrepancy may be attributed to the correlation between activations in convolutional neural networks, where adjacent pixels often exhibit similar activation due to spatial considerations.

\bibliography{neurips_2023}

\begin{thebibliography}{52}
\providecommand{\natexlab}[1]{#1}
\providecommand{\url}[1]{\texttt{#1}}
\expandafter\ifx\csname urlstyle\endcsname\relax
  \providecommand{\doi}[1]{doi: #1}\else
  \providecommand{\doi}{doi: \begingroup \urlstyle{rm}\Url}\fi

\bibitem[Abadi et~al.(2015)Abadi, Agarwal, Barham, Brevdo, Chen, Citro,
  Corrado, Davis, Dean, Devin, Ghemawat, Goodfellow, Harp, Irving, Isard, Jia,
  Jozefowicz, Kaiser, Kudlur, Levenberg, Man\'{e}, Monga, Moore, Murray, Olah,
  Schuster, Shlens, Steiner, Sutskever, Talwar, Tucker, Vanhoucke, Vasudevan,
  Vi\'{e}gas, Vinyals, Warden, Wattenberg, Wicke, Yu, and Zheng]{tensorflow}
Abadi, M., Agarwal, A., Barham, P., Brevdo, E., Chen, Z., Citro, C., Corrado,
  G.~S., Davis, A., Dean, J., Devin, M., Ghemawat, S., Goodfellow, I., Harp,
  A., Irving, G., Isard, M., Jia, Y., Jozefowicz, R., Kaiser, L., Kudlur, M.,
  Levenberg, J., Man\'{e}, D., Monga, R., Moore, S., Murray, D., Olah, C.,
  Schuster, M., Shlens, J., Steiner, B., Sutskever, I., Talwar, K., Tucker, P.,
  Vanhoucke, V., Vasudevan, V., Vi\'{e}gas, F., Vinyals, O., Warden, P.,
  Wattenberg, M., Wicke, M., Yu, Y., and Zheng, X.
\newblock {TensorFlow}: Large-scale machine learning on heterogeneous systems,
  2015.
\newblock URL \url{https://www.tensorflow.org/}.
\newblock Software available from tensorflow.org.

\bibitem[Anwar et~al.(2017)Anwar, Hwang, and Sung]{pruning}
Anwar, S., Hwang, K., and Sung, W.
\newblock Structured pruning of deep convolutional neural networks.
\newblock \emph{ACM Journal on Emerging Technologies in Computing Systems
  (JETC)}, 13\penalty0 (3):\penalty0 1--18, 2017.

\bibitem[Ba et~al.(2016)Ba, Kiros, and Hinton]{LayerNorm}
Ba, J.~L., Kiros, J.~R., and Hinton, G.~E.
\newblock Layer normalization.
\newblock \emph{arXiv preprint arXiv:1607.06450}, 2016.

\bibitem[Bengio et~al.(1994)Bengio, Simard, and Frasconi]{exploding_b}
Bengio, Y., Simard, P., and Frasconi, P.
\newblock Learning long-term dependencies with gradient descent is difficult.
\newblock \emph{IEEE transactions on neural networks}, 5\penalty0 (2):\penalty0
  157--166, 1994.

\bibitem[Botev et~al.(2016)Botev, Lever, and Barber]{nesterov}
Botev, A., Lever, G., and Barber, D.
\newblock Nesterov's accelerated gradient and momentum as approximations to
  regularised update descent, 2016.
\newblock URL \url{https://arxiv.org/abs/1607.01981}.

\bibitem[Brock et~al.(2021)Brock, De, Smith, and Simonyan]{AGC}
Brock, A., De, S., Smith, S.~L., and Simonyan, K.
\newblock High-performance large-scale image recognition without normalization.
\newblock In \emph{International Conference on Machine Learning}, pp.\
  1059--1071. PMLR, 2021.

\bibitem[Chatziafratis et~al.(2019)Chatziafratis, Nagarajan, Panageas, and
  Wang]{depth_width_tradeoff}
Chatziafratis, V., Nagarajan, S.~G., Panageas, I., and Wang, X.
\newblock Depth-width trade-offs for relu networks via sharkovsky's theorem.
\newblock \emph{arXiv preprint arXiv:1912.04378}, 2019.

\bibitem[Chen et~al.(2020{\natexlab{a}})Chen, Kornblith, Norouzi, and
  Hinton]{lars_ex3}
Chen, T., Kornblith, S., Norouzi, M., and Hinton, G.
\newblock A simple framework for contrastive learning of visual
  representations.
\newblock In \emph{International conference on machine learning}, pp.\
  1597--1607. PMLR, 2020{\natexlab{a}}.

\bibitem[Chen et~al.(2020{\natexlab{b}})Chen, Kornblith, Swersky, Norouzi, and
  Hinton]{lars_ex4}
Chen, T., Kornblith, S., Swersky, K., Norouzi, M., and Hinton, G.~E.
\newblock Big self-supervised models are strong semi-supervised learners.
\newblock \emph{Advances in neural information processing systems},
  33:\penalty0 22243--22255, 2020{\natexlab{b}}.

\bibitem[Clevert et~al.(2015)Clevert, Unterthiner, and Hochreiter]{elu}
Clevert, D.-A., Unterthiner, T., and Hochreiter, S.
\newblock Fast and accurate deep network learning by exponential linear units
  (elus).
\newblock \emph{arXiv preprint arXiv:1511.07289}, 2015.

\bibitem[Dosovitskiy et~al.(2020)Dosovitskiy, Beyer, Kolesnikov, Weissenborn,
  Zhai, Unterthiner, Dehghani, Minderer, Heigold, Gelly, et~al.]{vit}
Dosovitskiy, A., Beyer, L., Kolesnikov, A., Weissenborn, D., Zhai, X.,
  Unterthiner, T., Dehghani, M., Minderer, M., Heigold, G., Gelly, S., et~al.
\newblock An image is worth 16x16 words: Transformers for image recognition at
  scale.
\newblock \emph{arXiv preprint arXiv:2010.11929}, 2020.

\bibitem[Fong et~al.(2020)Fong, Chen, and Chen]{LAMBC}
Fong, J., Chen, S., and Chen, K.
\newblock Improving layer-wise adaptive rate methods using trust ratio
  clipping.
\newblock \emph{arXiv preprint arXiv:2011.13584}, 2020.

\bibitem[Frankle et~al.(2020)Frankle, Schwab, and Morcos]{early_phase}
Frankle, J., Schwab, D.~J., and Morcos, A.~S.
\newblock The early phase of neural network training.
\newblock \emph{arXiv preprint arXiv:2002.10365}, 2020.

\bibitem[Gilmer et~al.(2021)Gilmer, Ghorbani, Garg, Kudugunta, Neyshabur,
  Cardoze, Dahl, Nado, and Firat]{loss_curvature}
Gilmer, J., Ghorbani, B., Garg, A., Kudugunta, S., Neyshabur, B., Cardoze, D.,
  Dahl, G., Nado, Z., and Firat, O.
\newblock A loss curvature perspective on training instability in deep
  learning.
\newblock \emph{arXiv preprint arXiv:2110.04369}, 2021.

\bibitem[Goodfellow et~al.(2016)Goodfellow, Bengio, and
  Courville]{goodfellow2016}
Goodfellow, I., Bengio, Y., and Courville, A.
\newblock \emph{Deep learning}.
\newblock MIT press, 2016.

\bibitem[Goyal et~al.(2017{\natexlab{a}})Goyal, Doll{\'a}r, Girshick,
  Noordhuis, Wesolowski, Kyrola, Tulloch, Jia, and He]{linear_scaling}
Goyal, P., Doll{\'a}r, P., Girshick, R., Noordhuis, P., Wesolowski, L., Kyrola,
  A., Tulloch, A., Jia, Y., and He, K.
\newblock Accurate, large minibatch sgd: Training imagenet in 1 hour.
\newblock \emph{arXiv preprint arXiv:1706.02677}, 2017{\natexlab{a}}.

\bibitem[Goyal et~al.(2017{\natexlab{b}})Goyal, Doll{\'a}r, Girshick,
  Noordhuis, Wesolowski, Kyrola, Tulloch, Jia, and He]{warmup}
Goyal, P., Doll{\'a}r, P., Girshick, R., Noordhuis, P., Wesolowski, L., Kyrola,
  A., Tulloch, A., Jia, Y., and He, K.
\newblock Accurate, large minibatch sgd: Training imagenet in 1 hour.
\newblock \emph{arXiv preprint arXiv:1706.02677}, 2017{\natexlab{b}}.

\bibitem[Goyal et~al.(2017{\natexlab{c}})Goyal, Doll{\'a}r, Girshick,
  Noordhuis, Wesolowski, Kyrola, Tulloch, Jia, and He]{warmup_imagenet}
Goyal, P., Doll{\'a}r, P., Girshick, R., Noordhuis, P., Wesolowski, L., Kyrola,
  A., Tulloch, A., Jia, Y., and He, K.
\newblock Accurate, large minibatch sgd: Training imagenet in 1 hour.
\newblock \emph{arXiv preprint arXiv:1706.02677}, 2017{\natexlab{c}}.

\bibitem[Hastie et~al.(2022)Hastie, Montanari, Rosset, and
  Tibshirani]{indep_ex3}
Hastie, T., Montanari, A., Rosset, S., and Tibshirani, R.~J.
\newblock Surprises in high-dimensional ridgeless least squares interpolation.
\newblock \emph{The Annals of Statistics}, 50\penalty0 (2):\penalty0 949--986,
  2022.

\bibitem[He et~al.(2015)He, Zhang, Ren, and Sun]{he_initialization}
He, K., Zhang, X., Ren, S., and Sun, J.
\newblock Delving deep into rectifiers: Surpassing human-level performance on
  imagenet classification.
\newblock In \emph{Proceedings of the IEEE international conference on computer
  vision}, pp.\  1026--1034, 2015.

\bibitem[He et~al.(2016)He, Zhang, Ren, and Sun]{ResNet}
He, K., Zhang, X., Ren, S., and Sun, J.
\newblock Deep residual learning for image recognition.
\newblock pp.\  770--778, 2016.

\bibitem[Hendrycks \& Gimpel(2016)Hendrycks and Gimpel]{gelu}
Hendrycks, D. and Gimpel, K.
\newblock Gaussian error linear units (gelus).
\newblock \emph{arXiv preprint arXiv:1606.08415}, 2016.

\bibitem[Huo et~al.(2021)Huo, Gu, and Huang]{CLARS}
Huo, Z., Gu, B., and Huang, H.
\newblock Large batch optimization for deep learning using new complete
  layer-wise adaptive rate scaling.
\newblock In \emph{Thirty-Fifth AAAI Conference on Artificial Intelligence,
  AAAI}, pp.\  2--9, 2021.

\bibitem[Ioffe \& Szegedy(2015)Ioffe and Szegedy]{BatchNorm}
Ioffe, S. and Szegedy, C.
\newblock Batch normalization: Accelerating deep network training by reducing
  internal covariate shift.
\newblock \emph{arXiv preprint arXiv:1502.03167}, 2015.

\bibitem[Jensen(1906)]{jensen}
Jensen, J. L. W.~V.
\newblock Sur les fonctions convexes et les in{\'e}galit{\'e}s entre les
  valeurs moyennes.
\newblock \emph{Acta mathematica}, 30\penalty0 (1):\penalty0 175--193, 1906.

\bibitem[Kingma \& Ba(2014)Kingma and Ba]{adam}
Kingma, D. and Ba, J.
\newblock Adam: A method for stochastic optimization.
\newblock \emph{International Conference on Learning Representations}, 12 2014.

\bibitem[Liao \& Berg(2018)Liao and Berg]{sharpened_jensen}
Liao, J. and Berg, A.
\newblock Sharpening jensen's inequality.
\newblock \emph{The American Statistician}, 2018.

\bibitem[Lin et~al.(2020)Lin, Kong, Stich, and Jaggi]{extrapsgd}
Lin, T., Kong, L., Stich, S., and Jaggi, M.
\newblock Extrapolation for large-batch training in deep learning.
\newblock In \emph{International Conference on Machine Learning}, pp.\
  6094--6104. PMLR, 2020.

\bibitem[Loshchilov \& Hutter(2016)Loshchilov and Hutter]{cosine}
Loshchilov, I. and Hutter, F.
\newblock Sgdr: Stochastic gradient descent with warm restarts.
\newblock \emph{arXiv preprint arXiv:1608.03983}, 2016.

\bibitem[Maas et~al.(2013)Maas, Hannun, Ng, et~al.]{leakyrelu}
Maas, A.~L., Hannun, A.~Y., Ng, A.~Y., et~al.
\newblock Rectifier nonlinearities improve neural network acoustic models.
\newblock In \emph{Proc. icml}, volume~30, pp.\ ~3. Citeseer, 2013.

\bibitem[Mathuriya et~al.(2018)Mathuriya, Bard, Mendygral, Meadows, Arnemann,
  Shao, He, K{\"a}rn{\"a}, Moise, Pennycook, et~al.]{lars_ex1}
Mathuriya, A., Bard, D., Mendygral, P., Meadows, L., Arnemann, J., Shao, L.,
  He, S., K{\"a}rn{\"a}, T., Moise, D., Pennycook, S.~J., et~al.
\newblock Cosmoflow: Using deep learning to learn the universe at scale.
\newblock In \emph{SC18: International Conference for High Performance
  Computing, Networking, Storage and Analysis}, pp.\  819--829. IEEE, 2018.

\bibitem[Nado et~al.(2021)Nado, Gilmer, Shallue, Anil, and Dahl]{reality_check}
Nado, Z., Gilmer, J.~M., Shallue, C.~J., Anil, R., and Dahl, G.~E.
\newblock A large batch optimizer reality check: Traditional, generic
  optimizers suffice across batch sizes.
\newblock \emph{arXiv preprint arXiv:2102.06356}, 2021.

\bibitem[Narayanan et~al.(2019)Narayanan, Harlap, Phanishayee, Seshadri,
  Devanur, Ganger, Gibbons, and Zaharia]{lars_ex2}
Narayanan, D., Harlap, A., Phanishayee, A., Seshadri, V., Devanur, N.~R.,
  Ganger, G.~R., Gibbons, P.~B., and Zaharia, M.
\newblock Pipedream: generalized pipeline parallelism for dnn training.
\newblock In \emph{Proceedings of the 27th ACM Symposium on Operating Systems
  Principles}, pp.\  1--15, 2019.

\bibitem[Newton(1711)]{newton}
Newton, I.
\newblock \emph{De analysi per aequationes numero terminorum infinitas}.
\newblock 1711.

\bibitem[Pascanu et~al.(2013)Pascanu, Mikolov, and Bengio]{exploding_RNN}
Pascanu, R., Mikolov, T., and Bengio, Y.
\newblock On the difficulty of training recurrent neural networks.
\newblock In \emph{International conference on machine learning}, pp.\
  1310--1318. PMLR, 2013.

\bibitem[Paszke et~al.(2019)Paszke, Gross, Massa, Lerer, Bradbury, Chanan,
  Killeen, Lin, Gimelshein, Antiga, Desmaison, Kopf, Yang, DeVito, Raison,
  Tejani, Chilamkurthy, Steiner, Fang, Bai, and Chintala]{pytorch}
Paszke, A., Gross, S., Massa, F., Lerer, A., Bradbury, J., Chanan, G., Killeen,
  T., Lin, Z., Gimelshein, N., Antiga, L., Desmaison, A., Kopf, A., Yang, E.,
  DeVito, Z., Raison, M., Tejani, A., Chilamkurthy, S., Steiner, B., Fang, L.,
  Bai, J., and Chintala, S.
\newblock Pytorch: An imperative style, high-performance deep learning library.
\newblock In Wallach, H., Larochelle, H., Beygelzimer, A., d\textquotesingle
  Alch\'{e}-Buc, F., Fox, E., and Garnett, R. (eds.), \emph{Advances in Neural
  Information Processing Systems 32}, pp.\  8024--8035. Curran Associates,
  Inc., 2019.
\newblock URL
  \url{http://papers.neurips.cc/paper/9015-pytorch-an-imperative-style-high-performance-deep-learning-library.pdf}.

\bibitem[Pennington \& Worah(2017)Pennington and Worah]{indep_ex1}
Pennington, J. and Worah, P.
\newblock Nonlinear random matrix theory for deep learning.
\newblock \emph{Advances in neural information processing systems}, 30, 2017.

\bibitem[Philipp et~al.(2017)Philipp, Song, and Carbonell]{exploding_exist}
Philipp, G., Song, D., and Carbonell, J.~G.
\newblock The exploding gradient problem demystified-definition, prevalence,
  impact, origin, tradeoffs, and solutions.
\newblock \emph{arXiv preprint arXiv:1712.05577}, 2017.

\bibitem[Qiao et~al.(2019)Qiao, Wang, Liu, Shen, and
  Yuille]{weight_standardization}
Qiao, S., Wang, H., Liu, C., Shen, W., and Yuille, A.
\newblock Micro-batch training with batch-channel normalization and weight
  standardization.
\newblock \emph{arXiv preprint arXiv:1903.10520}, 2019.

\bibitem[Ramachandran et~al.(2017)Ramachandran, Zoph, and Le]{swish}
Ramachandran, P., Zoph, B., and Le, Q.~V.
\newblock Searching for activation functions.
\newblock \emph{arXiv preprint arXiv:1710.05941}, 2017.

\bibitem[Schoenholz et~al.(2016)Schoenholz, Gilmer, Ganguli, and
  Sohl-Dickstein]{indep_ex2}
Schoenholz, S.~S., Gilmer, J., Ganguli, S., and Sohl-Dickstein, J.
\newblock Deep information propagation.
\newblock \emph{arXiv preprint arXiv:1611.01232}, 2016.

\bibitem[Simonyan \& Zisserman(2014)Simonyan and Zisserman]{VGG}
Simonyan, K. and Zisserman, A.
\newblock Very deep convolutional networks for large-scale image recognition.
\newblock \emph{arXiv preprint arXiv:1409.1556}, 2014.

\bibitem[Srivastava et~al.(2014)Srivastava, Hinton, Krizhevsky, Sutskever, and
  Salakhutdinov]{dropout}
Srivastava, N., Hinton, G., Krizhevsky, A., Sutskever, I., and Salakhutdinov,
  R.
\newblock Dropout: a simple way to prevent neural networks from overfitting.
\newblock \emph{The Journal of Machine Learning Research}, 15\penalty0
  (1):\penalty0 1929--1958, 2014.

\bibitem[Ulyanov et~al.(2016)Ulyanov, Vedaldi, and Lempitsky]{InstanceNorm}
Ulyanov, D., Vedaldi, A., and Lempitsky, V.
\newblock Instance normalization: The missing ingredient for fast stylization.
\newblock \emph{arXiv preprint arXiv:1607.08022}, 2016.

\bibitem[Wu \& He(2018)Wu and He]{GroupNorm}
Wu, Y. and He, K.
\newblock Group normalization.
\newblock In \emph{Proceedings of the European conference on computer vision
  (ECCV)}, pp.\  3--19, 2018.

\bibitem[Yang et~al.(2019)Yang, Pennington, Rao, Sohl-Dickstein, and
  Schoenholz]{mean_field}
Yang, G., Pennington, J., Rao, V., Sohl-Dickstein, J., and Schoenholz, S.~S.
\newblock A mean field theory of batch normalization.
\newblock \emph{arXiv preprint arXiv:1902.08129}, 2019.

\bibitem[You et~al.(2017)You, Gitman, and Ginsburg]{LARS}
You, Y., Gitman, I., and Ginsburg, B.
\newblock Large batch training of convolutional networks.
\newblock \emph{arXiv preprint arXiv:1708.03888}, 2017.

\bibitem[You et~al.(2019)You, Li, Reddi, Hseu, Kumar, Bhojanapalli, Song,
  Demmel, Keutzer, and Hsieh]{LAMB}
You, Y., Li, J., Reddi, S., Hseu, J., Kumar, S., Bhojanapalli, S., Song, X.,
  Demmel, J., Keutzer, K., and Hsieh, C.-J.
\newblock Large batch optimization for deep learning: Training bert in 76
  minutes.
\newblock \emph{arXiv preprint arXiv:1904.00962}, 2019.

\bibitem[Yousefpour et~al.(2021)Yousefpour, Shilov, Sablayrolles, Testuggine,
  Prasad, Malek, Nguyen, Ghosh, Bharadwaj, Zhao, Cormode, and Mironov]{opacus}
Yousefpour, A., Shilov, I., Sablayrolles, A., Testuggine, D., Prasad, K.,
  Malek, M., Nguyen, J., Ghosh, S., Bharadwaj, A., Zhao, J., Cormode, G., and
  Mironov, I.
\newblock Opacus: {U}ser-friendly differential privacy library in {PyTorch}.
\newblock \emph{arXiv preprint arXiv:2109.12298}, 2021.

\bibitem[Yu et~al.(2022)Yu, Zhao, Jin, Chen, Dou, and Heng]{lars_ex5}
Yu, Y., Zhao, Z., Jin, Y., Chen, G., Dou, Q., and Heng, P.-A.
\newblock Pseudo-label guided cross-video pixel contrast for robotic surgical
  scene segmentation with limited annotations.
\newblock \emph{arXiv preprint arXiv:2207.09664}, 2022.

\bibitem[Zhang et~al.(2019)Zhang, Dauphin, and Ma]{fixup}
Zhang, H., Dauphin, Y.~N., and Ma, T.
\newblock Fixup initialization: Residual learning without normalization.
\newblock \emph{arXiv preprint arXiv:1901.09321}, 2019.

\bibitem[Zhu et~al.(2021)Zhu, Ni, Xu, Kong, Huang, and Goldstein]{gradinit}
Zhu, C., Ni, R., Xu, Z., Kong, K., Huang, W.~R., and Goldstein, T.
\newblock Gradinit: Learning to initialize neural networks for stable and
  efficient training.
\newblock \emph{Advances in Neural Information Processing Systems},
  34:\penalty0 16410--16422, 2021.

\end{thebibliography}
\bibliographystyle{icml2023}

\newpage
\appendix
\onecolumn

\appendix
\section{Appendix}
\subsection{Proof of Proposition \ref{thm:prop1}}
\label{prf:prop1}
We use the moment generating function and its properties. 

\begin{equation}
\begin{aligned}
M_X(t) :&= \mathbb{E}[e^{tX}] \\
    & = e^0P(X=0)+\int^\infty_0 e^{tx} \frac{1}{\sqrt{2\pi}\sigma} e^{-\frac{(x-\mu)^2}{2\sigma^2}}dx \\
    & = P(N(\mu, \sigma^2) \leq 0) + \int^\infty_0 \frac{1}{\sqrt{2\pi}\sigma} e^{-\frac{(x-\mu-\sigma^2t)^2-2\mu\sigma^2t-\sigma^4t^2}{2\sigma^2}}dx\\
    & = \phi(-\frac{\mu}{\sigma}) + e^{\mu t+\frac{\sigma^2t^2}{2}}\int^\infty_0 \frac{1}{\sqrt{2\pi}\sigma} e^{-\frac{(x-\mu-\sigma^2t)^2}{2\sigma^2}}dx\\
\end{aligned}
\end{equation}

where $\phi(x) = \int_{-\infty}^{x}\frac{1}{\sqrt{2\pi}}e^{-x^2/2}$ is the cumulative distribution function of the standard normal distribution. Substituting $x' = (x-\mu-\sigma^2t)/\sigma$, 

\begin{equation}
\begin{aligned}
    & = \phi(-\frac{\mu}{\sigma}) +  e^{\mu t+\frac{\sigma^2t^2}{2}}\int^\infty_{-\frac{\mu+\sigma^2t}{\sigma}} \frac{1}{\sqrt{2\pi}}e^{-\frac{x'^2}{2}}dx'\\
    & = \phi(-\frac{\mu}{\sigma}) +  e^{\mu t+\frac{\sigma^2t^2}{2}}(1-\phi(-\frac{\mu+\sigma^2t}{\sigma}))
\end{aligned}
\end{equation}

We calculate the first and second moment from the moment generating function. 

\begin{equation}
\begin{aligned}
\frac{d}{dt}M_X(t)
    & = (\mu+\sigma^2 t) e^{\mu t+\frac{\sigma^2t^2}{2}} (1-\phi(-\frac{\mu+\sigma^2t}{\sigma})) + e^{\mu t +\frac{\sigma^2t^2}{2}}(-1)(-\sigma)\frac{1}{\sqrt{2\pi}}e^{-\frac{1}{2}(\frac{\mu+\sigma^2t}{\sigma})^2} \\
    & = (\mu+\sigma^2t) e^{\mu t +\frac{\sigma^2t^2}{2}} (1-\phi(-\frac{\mu+\sigma^2t}{\sigma})) + \frac{\sigma}{\sqrt{2\pi}}e^{-\frac{\mu^2+2\mu\sigma^2t+\sigma^4t^2-2\sigma^2(\mu t + \sigma^2t^2/2)}{2\sigma^2}}\\
    & = (\mu+\sigma^2t) e^{\mu t+\frac{\sigma^2t^2}{2}} (1-\phi(-\frac{\mu+\sigma^2t}{\sigma})) + \frac{\sigma}{\sqrt{2\pi}}e^{-\frac{\mu^2}{2\sigma^2}}
\end{aligned}
\end{equation}

\begin{equation}
\begin{aligned}
\frac{d^2}{dt^2}M_X(t)
    & = (\sigma^2+(\mu+\sigma^2t)^2)e^{\mu t+\frac{\sigma^2 t^2}{2}} (1-\phi(-\frac{\mu+\sigma^2t}{\sigma})) \\
    & + (\mu+\sigma^2t) \ e^{\mu t+\frac{\sigma^2t^2}{2}} \frac{d}{dt} [1-\phi(-\frac{\mu+\sigma^2t}{\sigma})] \\
    & = (\sigma^2+(\mu+\sigma^2t)^2)e^{\mu t+\frac{\sigma^2 t^2}{2}} (1-\phi(-\frac{\mu+\sigma^2t}{\sigma}))  + (\mu+\sigma^2t) \frac{\sigma}{\sqrt{2\pi}}e^{-\frac{\mu^2}{2\sigma^2}}
\end{aligned}
\end{equation}

Now, we finally obtain:

\begin{equation}
\begin{aligned}
\mathbb{E}(X)  
    & = \frac{d}{dt}M_X(t)|_{t=0} = \mu(1-\phi (-\frac{\mu}{\sigma})) + \frac{\sigma}{\sqrt{2\pi}}e^{-\frac{\mu^2}{2\sigma^2}} \\
\end{aligned}
\end{equation}

\begin{equation}
\begin{aligned}
\mathrm{Var}(X) 
    & = \frac{d^2}{dt^2}M_X(t)|_{t=0} - (\frac{d}{dt}M_X(t)|_{t=0})^2 \\
    & = (\sigma^2+\mu^2)(1-\phi(-\frac{\mu}{\sigma})) + \mu \frac{\sigma}{\sqrt{2\pi}}e^{-\frac{\mu^2}{2\sigma^2}}  - (\mu(1-\phi (-\frac{\mu}{\sigma})) + \frac{\sigma}{\sqrt{2\pi}}e^{-\frac{\mu^2}{2\sigma^2}})^2
\end{aligned}
\end{equation}

which proves the proposition.

\subsection{Proof of Theorem \ref{thm:basic}}
\label{prf:basic}

Let us denote $\mathrm{ReLU}(x^n)=x^{n+}$, $W^nx^{n+} +b^n = \hat{x}^{n+1}$, and $BatchNorm(\hat{x}^{n+1}) = x^{n+1}$. We first calculate the statistics of $\hat{x}^{n+1}$, which is estimated by the batch normalization layer.

\begin{equation}
\begin{aligned}
\hat{\mu}_j^{n+1} 
    & = \mathbb{E}_x[\hat{x}^{n+1}_j] \\
    & = \mathbb{E}_x[\sum_i (x^{n+}_i W^n_{ij}) + b^{n}_j]\\
    & = \sum_i (\mathbb{E}_x[x^{n+}_i] W^n_{ij}) + b^{n}_j\\
    & = \sum_i (\mu^n(1-\phi (-\frac{\mu^n}{\sigma^n})) + \frac{\sigma^n}{\sqrt{2\pi}}e^{-\frac{(\mu^n)^2}{2(\sigma^n)^2}} ) W^n_{ij} + b^{n}_j
\end{aligned}
\end{equation}

Note that this equation describes the sampling process of the batch normalization layer, therefore the statistics of model parameters($W^n_{ij}$, $b^n_{j}$) are not considered. Other than this part, we omit the subscript if obvious. Similarly, 

\begin{equation}
\begin{aligned}
(\hat{\sigma}_j^{n+1})^2 
    & = Var_x(\hat{x}^{n+1}_j) \\
    & = Var_x(\sum_i (x^{n+}_i W^n_{ij}) + b^{n}_j)\\
    & = \sum_i Var_x(x^{n+}_i W^n_{ij}) \\
    & = \sum_i (W^n_{ij})^2 Var_x(x^{n+}_i) \\
    & = (  ((\sigma^n)^2+(\mu^n)^2)(1-\phi(-\frac{\mu^n}{\sigma^n})) + \mu^n \frac{\sigma^n}{\sqrt{2\pi}}e^{-\frac{(\mu^n)^2}{2(\sigma^n)^2}}  \\
    & \quad - (\mu^n(1-\phi (-\frac{\mu^n}{\sigma^n})) + \frac{\sigma^n}{\sqrt{2\pi}}e^{-\frac{(\mu^n)^2}{2(\sigma^n)^2}})^2 )\sum_i (W^n_{ij})^2 \\
\end{aligned}
\end{equation}

And $x^{n+1}_j$ is given by:

\begin{equation}
\begin{aligned}
x^{n+1}_j 
    & = BatchNorm(\hat{x}^{n+1}) \\ 
    & = \frac{\hat{x}^{n+1}-\hat{\mu}_j^{n+1}}{\hat{\sigma}_j^{n+1}}\gamma^{n+1}_j + \beta^{n+1}_j 
\end{aligned}
\end{equation}

where $\gamma^{n+1},\beta^{n+1}\in \mathbb{R}^{n_{d+1}}$ are affine transform parameters of batch normalization.

Now, we calculate the gradient of the input using the backpropagation algorithm. Since ReLU operation not only changes the activation but also blocks the gradient, they are treated separately. For indices such that $x^n_i \leq 0$, the gradient $g^n_i$ is just zero. Then, for each $i$ that is not blocked by ReLU (i.e. for $i \in \{ i: x^n_i > 0 \}$), we have:

\begin{equation}
\begin{aligned}
g_i^n 
    & = \frac{dL}{dx^n_i} = \sum_j \frac{dL}{dx^{n+1}_j}\frac{dx^{n+1}_j}{dx^n_i} \\
    & = \sum_j g^{n+1}_j \frac{dx^{n+1}_j}{d\hat{x}^{n+1}_j} \frac{d\hat{x}^{n+1}_j}{dx^n_i} \\
    & = \sum_j g^{n+1}_j \frac{\gamma^{n+1}_j}{\hat{\sigma}_j^{n+1}} W^n_{ij}
\end{aligned}
\end{equation}

Assuming the gradient $g^{n+1}_j$ is zero-centered and independent to other variables, we have

\begin{equation}
\begin{aligned}
\mathrm{Var}(g_i^n)
    & = \mathrm{Var}(\sum_j g^{n+1}_j \frac{\gamma^{n+1}_j}{\hat{\sigma}^{n+1}_j} W^n_{ij}) \\ 
    & = \sum_j \mathrm{Var}(g^{n+1}_j \frac{\gamma^{n+1}_j}{\hat{\sigma}^{n+1}_j} W^n_{ij}) \\ 
    & = \sum_j \mathbb{E}[(g^{n+1}_j \frac{\gamma^{n+1}_j}{\hat{\sigma}^{n+1}_j} W^n_{ij})^2] - \mathbb{E}[g^{n+1}_j \frac{\gamma^{n+1}_j}{\hat{\sigma}^{n+1}_j} W^n_{ij}]^2   \\ 
    & = \sum_j \mathrm{Var}(g^{n+1}_j) (\gamma^{n+1}_j)^2 \mathbb{E}[\frac{1}{(\hat{\sigma}^{n+1}_j)^2}] \mathbb{E}[(W^n_{ij})^2] \\ 
\end{aligned}
\end{equation}

since $f(x)=1/x$ is convex at $x>0$, we can apply Jensen's inequality \citep{jensen}. 

\begin{equation}
\begin{aligned}
    & \geq \sum_j \mathrm{Var}(g^{n+1}_j) (\gamma^{n+1}_j)^2 \frac{1}{\mathbb{E}[(\hat{\sigma}^{n+1}_j)^2]} \mathbb{E}[(W^n_{ij})^2] \\
    & =  \sum_j \mathrm{Var}(g^{n+1}_j) (\sigma^{n+1}_j)^2 \frac{\mathbb{E}[(W^n_{ij})^2]}{\mathbb{E}[\sum_k \mathrm{Var}(x_k^{n+}) (W^n_{kj})^2]} \\
    & = \sum_j (\sigma^{n+1}_j)^2 \mathrm{Var}(g^{n+1}_j) \frac{\sigma_w^2}{\mathbb{E}[\sum_k  \mathrm{Var}(x_k^{n+})] \sigma_w^2} \\
    & = \frac{1}{\sum_k \mathrm{Var}(x_k^{n+})} \sum_j (\sigma^{n+1}_j)^2 \mathrm{Var}(g^{n+1}_j)
\end{aligned}
\end{equation}

The equality holds when $\hat{\sigma}_j^{n+1}$ is identical for all $j$. Note that this formula is no longer dependent on the input index $i$, if $x_i^n$ is not blocked by ReLU. Since the probability of passing ReLU is $1-\phi(-\mu^n/\sigma^n)$ for any input index $i$, we have 

\begin{equation}
\begin{aligned}
\mathbb{E}&[\sum_i  (\sigma_i^n)^2  \mathrm{Var}(g^n_i)] / \sum_j (\sigma^{n+1}_j)^2 \mathrm{Var}(g_j^{n+1}) ] \\
    & \geq \mathbb{E}[ \sum_i (1-\phi(-\frac{\mu^n}{\sigma^n})) (\sigma^n)^2 \frac{1}{\sum_k \mathrm{Var}(x_k^{n+})}  \sum_j (\sigma^{n+1}_j)^2 \mathrm{Var}(g^{n+1}_j)/\sum_j (\sigma^{n+1}_j)^2 \mathrm{Var}(g_j^{n+1})] \\
    & = (1-\phi(-\frac{\mu^n}{\sigma^n})) (\sigma^n)^2 / \Big( ((\sigma^n)^2+(\mu^n)^2)(1-\phi(-\frac{\mu^n}{\sigma^n})) + \mu^n \frac{\sigma^n}{\sqrt{2\pi}}e^{-\frac{(\mu^n)^2}{2(\sigma^n)^2}}  \\
    & \quad - (\mu^n(1-\phi (-\frac{\mu^n}{\sigma^n})) + \frac{\sigma^n}{\sqrt{2\pi}}e^{-\frac{(\mu^n)^2}{2(\sigma^n)^2}})^2 \Big)\\
    & = \frac{1-\phi(-\frac{\mu^n}{\sigma^n})} { (1+(\frac{\mu^n}{\sigma^n})^2)(1-\phi(-\frac{\mu^n}{\sigma^n})) + \frac{\mu^n}{\sigma^n} \frac{1}{\sqrt{2\pi}}exp(-\frac{1}{2} (\frac{\mu^n}{\sigma^n})^2)  - (\frac{\mu^n}{\sigma^n}(1-\phi (-\frac{\mu^n}{\sigma^n})) +   \frac{1}{\sqrt{2\pi}}exp(-\frac{1}{2} (\frac{\mu^n}{\sigma^n})^2) )^2  } \\ 
    & = \frac{\erfc(-R)} { (1+2R^2)\erfc(-R) + 2R e^{-R^2}/\sqrt{\pi}  - (R\erfc(-R) +   e^{-R^2}/\sqrt{\pi} )^2  } 
\end{aligned}
\end{equation}

where $R := \frac{\mu^n}{\sqrt{2}\sigma^n}$ and $\erfc (x) = \frac{2}{\pi} \int_{x}^{\infty}\frac{1}{\sqrt{2\pi}}e^{-x^2/2} = 2-2\phi(x\sqrt{2})$ is a complementary error function. 

\subsection{Proof of Corollary \ref{thm:R}}
\label{prf:R}
In case of $R\to -\infty$, observing $\erfc (x)$ exponentially converges to zero at $x \to \infty$, both denominator and numerator converges to zero. And the derivative of the denominator

\begin{equation}
\begin{aligned}
\frac{d}{dR} &\Big( (1+2R^2)\erfc(-R) + 2R \frac{e^{-R^2}}{\sqrt{\pi}}  - (R\erfc(-R) +   \frac{e^{-R^2}}{\sqrt{\pi}} )^2            \Big) \\
    &= 4R \erfc (-R) + (1+2R^2)2 \frac{e^{-R^2}}{\sqrt{\pi}} + (2-4R^2) \frac{e^{-R^2}}{\sqrt{\pi}} \\
    & \quad -2(R \erfc (-R) +  \frac{e^{-R^2}}{\sqrt{\pi}})(\erfc (-R) + R \cdot 2 \frac{e^{-R^2}}{\sqrt{\pi}} -2R  \frac{e^{-R^2}}{\sqrt{\pi}}) \\
    & = 4R \erfc (-R) + \frac{e^{-R^2}}{\sqrt{\pi}}(4-2\erfc (-R)) -2R \erfc(-R)^2 \\
    & = 2(2-\erfc (-R)) (\erfc (-R) + \frac{e^{-R^2}}{\sqrt{\pi}})
\end{aligned}
\end{equation}

is strictly positive in $(-\infty, \infty)$. Therefore we can utilize L'Hôpital's rule:

\begin{equation}
\begin{aligned}
\lim_{R \to -\infty} & \frac{\erfc(-R)} { (1+2R^2)\erfc(-R) + 2R e^{-R^2}/\sqrt{\pi}  - (R\erfc(-R) +   e^{-R^2}/\sqrt{\pi} )^2} \\
    & = \lim_{R \to -\infty} \frac{\frac{e^{-R^2}}{\sqrt{\pi}}}{(2-\erfc (-R)) (\erfc (-R) + \frac{e^{-R^2}}{\sqrt{\pi}})} \\
    & = \lim_{R \to -\infty} \frac{1}{(2-\erfc (-R)) (\erfc (-R) / (\frac{e^{-R^2}}{\sqrt{\pi}})+1)} = \infty
\end{aligned}
\end{equation}

since $2-\erfc (-R) \to 0+$ and $\erfc (-R)/e^{-R^2} \to 0$. 

\begin{equation}
\begin{aligned}
\end{aligned}
\end{equation}

\subsection{Proof of Theorem \ref{thm:upper_bound}}
\label{prf:upper_bound}
We start from eq.(17) and get the upper bound of $\mathbb{E}[1/(\hat{\sigma}_j^{n+1})^2]$ instead of lower bound. Since $W_{ij}$ is normally distributed, $\sum W_{ij}$ can be viewed as a noncentral chi-squared distribution with nontrivial variance. Let $X$ be a chi-squared distribution with $d_n$ degrees of freedom. For any $0<c<1$ and $t<0$,

\begin{equation}
\begin{aligned}
P(\sum_i (\frac{W_{ij}^n}{\sigma_w})^2 < d_n \delta) &\leq P(X < d_n \delta)  = P(e^{tX} > d_n t\delta) \\ 
 & \leq M_X(t)e^{-t\delta n} = e^{-(n/2)ln(1-2t)-t\delta n}
\end{aligned}
\end{equation}

The first inequality comes from the fact that $W_{ij}/\sigma_w$ may not be zero-centered, and the second one is Markov's inequality. Setting $t=(1-1/\delta )/2$,

\begin{equation}
\begin{aligned}
P(\sum_i (W_{ij}^n)^2 < \sigma_w^2 d_n \delta) &\leq e^{-(d_n/2)(\delta-1-\mathrm{ln}(\delta))}
\end{aligned}
\end{equation}

Using the sharpened Jensen's inequality $\mathbb{E}[f(X)]-f(\mathbb{E}[X]) \leq \sigma^2_X \mathrm{sup}(f''(x)/2)$ proposed by \citep{sharpened_jensen}, at least with probability $1-e^{-(d_n/2)(\delta -1-\mathrm{ln}(\delta))}$, we have

\begin{equation}
\begin{aligned}
\mathbb{E}[\frac{1}{\hat{\sigma}_j^{n+1}}]\mathbb{E}[(W_{ij}^n)^2] & = \frac{1}{\sum_k \mathrm{Var}(x_k^{n+})} \mathbb{E}[\frac{1}{(W_{ij}^n)^2}] \mathbb{E}[(W_{ij}^n)^2] \\
    & \leq \frac{1}{\sum_k \mathrm{Var}(x_k^{n+})} ( \frac{\mathbb{E}[(W_{ij}^n)^2]}{\mathbb{E}[(W_{ij}^n)^2]} + \mathbb{E}[(W_{ij}^n)^2]2\sigma_w^4(1+2\frac{\mu_w^2}{\sigma_w^2} ) \frac{2}{2(\sigma_w^2 d_n \delta)^3}) \\
    &= \frac{1}{\sum_k \mathrm{Var}(x_k^{n+})} (1 + \sigma_w^2(1+\frac{\mu_w^2}{\sigma_w^2}) \sigma_w^4(1+2\frac{\mu_w^2}{\sigma_w^2}) \frac{2}{\sigma_w^6 d_n^3 \delta^3}) \\ 
    & = \frac{1}{\sum_k \mathrm{Var}(x_k^{n+})} (1 + \frac{2(1+\frac{\mu_w^2}{\sigma_w^2}) (1+2\frac{\mu_w^2}{\sigma_w^2})}{d_n^3 \delta^3})
\end{aligned}
\end{equation}

There is at least $(1-\mathrm{exp}(-(d_n/2)(\delta -1-\mathrm{ln}(\delta))))^{d_{n+1}} \geq 1- d_{n+1} \mathrm{exp}(-(d_n/2)(\delta -1-\mathrm{ln}(\delta)))$ chance that the inequality holds for all $1 \leq j \leq d_{n+1}$. The remaining part is the same as Theorem \ref{thm:basic}. 

\subsection{Proof of Theorem \ref{thm:corr}}
\label{prf:corr}
Since inputs($x^n$) are perfectly correlated, let $b \sim N(0,1)$ and $x^{n}=b t$, where $t \in \mathbb{R}^{n_{d}}$ is treated as a constant. Similar to Theorem 1, We denote $\mathrm{ReLU}(x^n)=x^{n+}$, $W^nx^{n+} +b^n = \hat{x}^{n+1}$, and $BatchNorm(\hat{x}^{n+1}) = x^{n+1}$ but ignore the bias term $b^n$ since it doesn't affect the variance and is immediately normalized by the batch normalization layer. We first calculate the mean and variance of $\hat{x}^{n+1}$. As the sign of $x^{n}$ is reversed corresponding to positive and negative $t$, they are treated separately.

\begin{equation}
\begin{aligned}
\hat{\mu}_j^{n+1} 
    & = \mathbb{E}_x[\hat{x}^{n+1}_j] \\
    & = \mathbb{E}_x[\hat{x}^{n+1}_j | b \geq 0]P(b \geq 0) + E_x[\hat{x}^{n+1}_j | b \leq 0]P(b \leq 0) \\
    & = \frac{1}{2}\mathbb{E}_x[\sum_i W_{ij}^n \mathrm{ReLU}(t_i)b | b \geq 0] + \frac{1}{2}E_x[\sum_i W_{ij}^n \mathrm{ReLU}(-t_i)(-b) | b \leq 0]\\
    & = \frac{1}{2}\frac{2}{\sqrt{2\pi}}\sum_i W_{ij}^n (\mathrm{ReLU}(t_i)+\mathrm{ReLU}(-t_i)) \\
    & = \frac{1}{\sqrt{2\pi}}\sum_i W_{ij}^n |t_i|\\
\end{aligned}
\end{equation}

The conditional expectation of $b$ is calculated using Proposition 1. Then, we directly calculate the variance before normalization.   

\begin{equation}
\begin{aligned}
(\hat{\sigma}_j^{n+1})^2 
    & =\int_{-\infty}^{\infty} (\hat{x}^{n+1}_j - \hat{\mu}_j^{n+1})^2 P(b)db  \\
    & = \int_{-\infty}^0 (\sum_i W_{ij}^n \mathrm{ReLU}(-t_i)(-b) - \hat{\mu}_j^{n+1})^2 \frac{1}{\sqrt{2\pi}}e^{-\frac{b^2}{2}}db \\
    & + \int_{0}^{\infty} (\sum_i W_{ij}^n \mathrm{ReLU}(t_i)(b) - \hat{\mu}_j^{n+1})^2 \frac{1}{\sqrt{2\pi}}e^{-\frac{b^2}{2}}db \\
    & = \int_{-\infty}^0 (-N_j b - \hat{\mu}_j^{n+1})^2 \frac{1}{\sqrt{2\pi}}e^{-\frac{b^2}{2}}db  + \int_{0}^{\infty} (P_j b - \hat{\mu}_j^{n+1})^2 \frac{1}{\sqrt{2\pi}}e^{-\frac{b^2}{2}}db
\end{aligned}
\end{equation}

where $N_j := \sum_i W_{ij}^n \mathrm{ReLU}(-t_i)$ and $P_j := \sum_i W_{ij}^n \mathrm{ReLU}(t_i)$. Using the formula $\int (ax-b)^2 e^{-x^2/2}dx = \sqrt{\frac{\pi}{2}}(a^2+b^2)erf(\frac{x}{\sqrt{2}})-ae^{-x^2/2}(ax-2b)+C$, where $erf(x) = 2\phi(x\sqrt{2})-1$ is an error function, we obtain:

\begin{equation}
\begin{aligned}
    & = \frac{1}{\sqrt{2\pi}} \Big[\sqrt{\frac{\pi}{2}}(N_j^2+(\hat{\mu}_j^{n+1})^2)erf(\frac{x}{\sqrt{2}})+Ne^{-\frac{x^2}{2}}(-N_jx-2\hat{\mu}_j^{n+1})  \Big]^0_{-\infty} \\ 
    & + \frac{1}{\sqrt{2\pi}} \Big[\sqrt{\frac{\pi}{2}}(P_j^2+(\hat{\mu}_j^{n+1})^2)erf(\frac{x}{\sqrt{2}})-Pe^{-\frac{x^2}{2}}(P_jx-2\hat{\mu}_j^{n+1}) \Big]^{\infty}_{0}\\
    & = \frac{1}{\sqrt{2\pi}}[-2N_j\hat{\mu}_j^{n+1}+\sqrt{\frac{\pi}{2}}(N_j^2+(\hat{\mu}_j^{n+1})^2) + \sqrt{\frac{\pi}{2}}(P_j^2+(\hat{\mu}_j^{n+1})^2) - 2P_j\hat{\mu}_j^{n+1}] \\
    & = \frac{1}{2}(N_j^2+P_j^2)+(\hat{\mu}_j^{n+1})^2- \sqrt{\frac{2}{\pi}}\hat{\mu}_j^{n+1}(P_j+N_j)
\end{aligned}
\end{equation}

 Since $\{W_{ij}\}$ are independent of each other and $\mathbb{E}[N_j] = \mathbb{E}[P_j] = \mathbb{E}[\hat{\mu}_j^{n+1}] = 0$, we obtain:

\begin{equation}
\begin{aligned}
\mathbb{E}[(\hat{\sigma}_j^{n+1})^2]
    & = \mathbb{E}[\frac{1}{2}(N_j^2+P_j^2)+(\mu_j^{n+1})^2- \sqrt{\frac{2}{\pi}}\mu_j^{n+1}(P_j+N_j)] \\ 
    & = \frac{1}{2}(\mathbb{E}[N_j^2]+\mathbb{E}[P_j^2])+\mathbb{E}[(\mu_j^{n+1})^2] -\sqrt{\frac{2}{\pi}}\mathbb{E}[\mu_j^{n+1}(P_j+N_j)] \\ 
    & = \frac{1}{2}(\mathrm{Var}(N_j)+\mathrm{Var}(P_j))+\mathrm{Var}(\mu_j^{n+1})-0 \\
    & = \frac{1}{2}(\sigma_w^2 \sum_i \mathrm{ReLU}(t_i)^2 +\sigma_w^2 \sum_i \mathrm{ReLU}(-t_i)^2) + \frac{1}{2\pi} \sigma_w^2 \sum_i |t_i|^2 \\ 
    & = \sigma_w^2 (\frac{1}{2}+\frac{1}{2\pi})\sum_i t_i^2 \\
\end{aligned}
\end{equation}

where $\sigma_w$ denotes the standard deviation of the weight. Let $I^+ := \{ i: x^n_i > 0 \}$. For $i \in I^+$, we have:

\begin{equation}
\begin{aligned}
\mathrm{Var}(g_i^n)
    & = \mathrm{Var}(\sum_j g^{n+1}_j \frac{\gamma^{n+1}_j}{\hat{\sigma}^{n+1}_j} W^n_{ij}) \\ 
    & = \sum_j \mathrm{Var}(g^{n+1}_j \frac{\gamma^{n+1}_j}{\hat{\sigma}^{n+1}_j} W^n_{ij}) \\ 
    & = \sum_j \mathrm{Var}(g^{n+1}_j) (\mathbb{E}[\frac{\gamma_j^2}{(\hat{\sigma}^{n+1}_j)^2} (W^n_{ij})^2]-\mathbb{E}[\frac{\gamma^{n+1}_j}{\sigma^{n+1}_j} W^n_{ij}]^2) \\
    & = \sum_j \mathrm{Var}(g^{n+1}_j) (\gamma^{n+1}_j)^2 \mathbb{E}[\frac{1}{(\hat{\sigma}^{n+1}_j)^2}] \mathbb{E}[(W^n_{ij})^2] \\ 
    & \geq \sum_j \mathrm{Var}(g^{n+1}_j) (\gamma^{n+1}_j)^2 \frac{1}{\mathbb{E}[(\hat{\sigma}^{n+1}_j)^2]} \mathbb{E}[(W^n_{ij})^2] \\
    & = \frac{2\pi}{\pi+1} \sum_j (\sigma^{n+1}_j)^2 \mathrm{Var}(g^{n+1}_j) \frac{\sigma_w^2}{\sum_k t_k^2 \sigma_w^2} \\
    & = \frac{2\pi}{\pi+1} \frac{1}{\sum_k t_k^2} \sum_j (\sigma^{n+1}_j)^2 \mathrm{Var}(g^{n+1}_j)
\end{aligned}
\end{equation}

The equality holds when $\hat{\sigma}_j^{n+1}$ are identical for all $j$. Finally, we obtain:

\begin{equation}
\begin{aligned}
\mathbb{E}&[\sum_i  (\sigma_i^n)^2  \mathrm{Var}(g^n_i) / \sum_j (\sigma^{n+1}_j)^2 \mathrm{Var}(g^{n+1}_j)] \\
    & \geq \mathbb{E}[\frac{1}{2}\sum_i t_i^2  \frac{2\pi}{\pi+2} \frac{1}{\sum_k t_k^2} \sum_j (\sigma^{n+1}_j)^2 \mathrm{Var}(g^{n+1}_j) / \sum_j (\sigma^{n+1}_j)^2 \mathrm{Var}(g^{n+1}_j)] \\ 
    & = \frac{1}{2}\frac{2\pi}{\pi+1} \frac{\sum_i t_i^2}{\sum_k t_k^2}\\ 
    & = \frac{\pi}{\pi+1}
\end{aligned}
\end{equation}

This proves the theorem.

\subsection{Proof of Lemma \ref{thm:hessian}}
\label{prf:hessian}

Let $x^N$ be an output of the network, which is scalar by definition. Since a ReLU network is piecewise linear, $dx^N/dx_i^n$ is not a function of $x_i^n$, for arbitrary $n \leq N$ and $i \leq d_n$. Let $l$ be a nonlinear loss function and $L = l(dx^N)$.

\begin{equation}
\begin{aligned}
h_i^n := \frac{d^2 L}{(dx_i^n)^2} 
    & = \frac{d^2 l(x^N)}{(dx_i^n)^2} = \frac{d}{dx_i^n} \Big( \frac{dx^N}{dx_i^n} l'(x^N) \Big) = \Big( \frac{dx^N}{dx_i^n} \Big)^2 l''(x^N) = \Big( \frac{g_i^n}{l'(x^N)} \Big)^2 l''(x^N)
\end{aligned}
\end{equation}

If $g$ is normally distributed, statistics of $h$ can be calculated with a chi-square distribution with $1$ degree of freedom.

\begin{equation}
\begin{aligned}
\frac{\mathrm{Var}(h_i^n)}{\mathrm{Var}(h_j^m)} &= \frac{\frac{l''(x^N)^2}{l'(x^N)^4} \mathrm{Var}((g_i^n)^2)}{\frac{l''(x^N)^2}{l'(x^N)^4} \mathrm{Var}((g_j^m)^2)} = \frac{\mathrm{Var}(g_i^n)^2 2 \cdot 1}{\mathrm{Var}(g_j^m)^2 2 \cdot 1} = \Big( \frac{\mathrm{Var}(g_i^n)}{\mathrm{Var}(g_j^m)} \Big)^2 
\end{aligned}
\end{equation}

\section{Experimental Details}
\label{appendix:detail}
In all experiments, we employed Stochastic Gradient Descent with a momentum of 0.9 and weight decay rate of $5 \times 10^{-4}$. We employed a learning rate of $0.1$ for a batch size of $128$, which was then linearly scaled with batch size. \citep{linear_scaling} For the CIFAR10 experiment, we utilized the ResNet50 architecture \citep{ResNet}. To accommodate the smaller image size, we removed the first pooling layer and stride operation of the first convolutional layer. The input images were normalized using the mean values of (0.4914, 0.4822, 0.4465) and standard deviation values of (0.2023, 0.1994, 0.2010). Random cropping with a padding size of 4 and random horizontal flips were applied as data augmentation techniques. In the case of using WarmUp \citep{warmup}, the learning rate was linearly increased over the first 10 epochs for the CIFAR10 experiment (20 or 30 epochs for a `longer' WarmUp) and two epochs for the ImageNet experiment. Subsequently, a cosine learning rate decay without restarts \citep{cosine} was employed.

In the ImageNet experiment, the input images were normalized using the mean values of (0.485, 0.456, 0.406) and standard deviation values of (0.229, 0.224, 0.225). As data augmentation, we applied RandomResizedCrop \citep{ResNet} with an image size of 224, followed by a random horizontal flip. During the evaluation step, the images were resized to a size of 256 and center-cropped to a size of 224.

All experiments were conducted using a single RTX2080 Ti 12GB GPU. For experiments with large batch sizes, we utilized a computational trick where the parameter update was performed every n steps to effectively increase the batch size by n times. This approach is almost identical to using a batch size that is n times larger, with the exception of slightly higher sampling errors from normalization layers.

LAMB \citep{LAMB}, a modified version of LARS based on the Adam optimizer \citep{adam}, was not considered in these experiments. The Adam optimizer requires an additional hyperparameter and exhibits different characteristics, making a fair comparison with an SGD optimizer more complex. While Adam is known for its fast convergence, a basic SGD optimizer with momentum often outperforms it when employing a longer training schedule. Although the entire experiment could be repeated using an Adam-family optimizer, we do not consider it necessary at this time.

Nesterov momentum \citep{nesterov} has been shown to slightly and consistently improve performance across various setups. However, we have not found evidence that it specifically overcomes training instability. \citep{extrapsgd} proposed a method to enhance Nesterov momentum, but it is primarily targeted for distributed learning and does not demonstrate improvement in the single-node case. Similarly, \citep{CLARS} suggests using per-sample gradients instead of minibatch gradients, primarily in the context of distributed learning. We observed some benefits in specific cases, but calculating per-sample gradients was not readily available in popular deep learning frameworks such as PyTorch \citep{pytorch} or TensorFlow \citep{tensorflow}. Therefore, we adopted a module from Opacus \citep{opacus}, originally developed for differential privacy, to acquire the per-sample gradients. This module is the best option currently known to us, despite the fact that it incurs at least a doubling of time and memory requirements.

\begin{figure}
\centering
\includegraphics[width=0.7\textwidth]{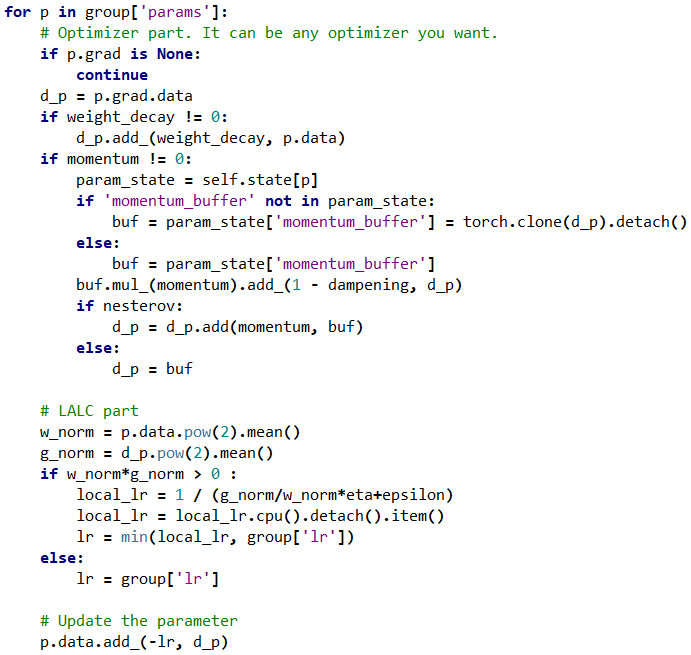}
\caption{ PyTorch \citep{pytorch} implementation of LALC. It can easily be implemented by adding a few lines of code to the existing optimizer.}
\end{figure}

\begin{table}[h]
\centering
\begin{tabular}{|c|c|c|c|c|c|}
\hline
Batch size & LARS  & CLARS  & LAMBC & AGC & LALC(ours)  \\ \hline
128        & $10^{-2}$  & $10^{-2}$   & $10^{-2}$ & $10^{-1}$ & $10^{3}$     \\ \hline
2048       & $10^{-3}$ & $10^{-3}$  & $10^{-2}$  & $10^{-1}$ & $10^{3}$    \\ \hline
4096       & $10^{-3}$ & $10^{-3}$  & $10^{-2}$  & $10^{-2}$ & $10^{3}$    \\ \hline
8192       & $10^{-3}$ & $10^{-4}$ & $10^{-2}$  & $10^{-2}$ & $2 \cdot 10^{3}$    \\ \hline
\end{tabular}
\caption{Coefficient $\eta$ used in our experiment. We used $\epsilon=10^{-3}$ for AGC, $10^0$ for LALC, and small numbers like $10^{-8}$ for others.}
\end{table}

\begin{table}[h]
\centering
\begin{tabular}{|l|llll|}
\hline
                & \multicolumn{4}{l|}{Batch size}                                                              \\ \cline{2-5} 
Method          & \multicolumn{1}{l|}{128}   & \multicolumn{1}{l|}{2048}  & \multicolumn{1}{l|}{4096}  & 8192  \\ \hline
SGD             & \multicolumn{1}{l|}{94.64 $\pm$ 0.31} & \multicolumn{1}{l|}{93.44 $\pm$ 0.31} & \multicolumn{1}{l|}{91.75 $\pm$ 0.59} & 71.17 $\pm$ 11.32 \\ \hline
SGD+WarmUp      & \multicolumn{1}{l|}{\textbf{95.48 $\pm$ 0.24}} & \multicolumn{1}{l|}{94.02 $\pm$ 0.27}  & \multicolumn{1}{l|}{92.13 $\pm$ 0.75} & 72.52 $\pm$ 4.68 \\ \hline
Nesterov+WarmUp & \multicolumn{1}{l|}{95.43 $\pm$ 0.12} & \multicolumn{1}{l|}{93.45 $\pm$ 0.18} & \multicolumn{1}{l|}{90.09 $\pm$ 0.89} & 62.62 $\pm$ 2.34 \\ \hline
LARS            & \multicolumn{1}{l|}{94.41 $\pm$ 0.18} & \multicolumn{1}{l|}{93.16 $\pm$ 0.16} & \multicolumn{1}{l|}{93.98 $\pm$ 0.21} & 88.22 $\pm$ 1.77  \\ \hline
LARS+WarmUp     & \multicolumn{1}{l|}{94.81 $\pm$ 0.09} & \multicolumn{1}{l|}{93.17 $\pm$ 0.04} & \multicolumn{1}{l|}{93.54 $\pm$ 0.09} & 92.98 $\pm$ 0.19  \\ \hline
LARS+WarmUp (20 epochs)     & \multicolumn{1}{l|}{94.74 $\pm$ 0.19} & \multicolumn{1}{l|}{93.25 $\pm$ 0.31} & \multicolumn{1}{l|}{93.51 $\pm$ 0.05} & 93.53 $\pm$ 0.24 \\ \hline
LARS+WarmUp (30 epochs)     & \multicolumn{1}{l|}{95.01 $\pm$ 0.09} & \multicolumn{1}{l|}{93.12 $\pm$ 0.19} & \multicolumn{1}{l|}{93.55 $\pm$ 0.10} & 93.70 $\pm$ 0.45 \\ \hline
CLARS           & \multicolumn{1}{l|}{92.59 $\pm$ 0.17} & \multicolumn{1}{l|}{93.51 $\pm$ 0.18} & \multicolumn{1}{l|}{93.61 $\pm$ 0.33} & 87.74 $\pm$ 0.79 \\ \hline
AGC           & \multicolumn{1}{l|}{95.36 $\pm$ 0.15} & \multicolumn{1}{l|}{94.36 $\pm$ 0.17} & \multicolumn{1}{l|}{93.49 $\pm$ 0.14} & 91.86 $\pm$ 0.33 \\ \hline
LAMBC      & \multicolumn{1}{l|}{95.18 $\pm$ 0.02}  & \multicolumn{1}{l|}{94.39 $\pm$ 0.11} & \multicolumn{1}{l|}{94.19 $\pm$ 0.06} & 93.73 $\pm$ 0.14 \\ \hline
LAMBC+WarmUp     & \multicolumn{1}{l|}{95.31 $\pm$ 0.14} & \multicolumn{1}{l|}{94.68 $\pm$ 0.17} & \multicolumn{1}{l|}{94.15 $\pm$ 0.12} & 93.49 $\pm$ 0.18 \\ \hline
LAMBC+Nesterov   & \multicolumn{1}{l|}{95.21 $\pm$ 0.13} & \multicolumn{1}{l|}{94.42 $\pm$ 0.14} & \multicolumn{1}{l|}{94.12 $\pm$ 0.21} & 93.28 $\pm$ 0.25 \\ \hline
LALC(ours)   & \multicolumn{1}{l|}{95.13 $\pm$ 0.03} & \multicolumn{1}{l|}{\textbf{95.07 $\pm$ 0.15}} & \multicolumn{1}{l|}{\textbf{94.82 $\pm$ 0.16}} & \textbf{94.15 $\pm$ 0.14} \\ \hline
\end{tabular}
\caption{The mean $\pm$ standard deviation of validation accuracy in the CIFAR10 experiment (Figure \ref{fig:CIFAR}). All experiments are averaged over 3 runs.}
\end{table}

\end{document}